\documentclass{article}
\usepackage[T1]{fontenc}
\usepackage{amsfonts}
\usepackage{bm}
\usepackage{graphicx}
\usepackage{relsize}
\usepackage{subcaption}
\usepackage{amssymb}
\usepackage{hyperref}
\usepackage{amsmath}
\usepackage{dsfont}
\usepackage{amsthm}
\usepackage{booktabs}
\usepackage{rotating}
\usepackage{multirow}
\usepackage{arxiv}

\newtheorem{theorem}{Theorem}

\newtheorem{proposition}{Proposition}

\theoremstyle{definition}
\newtheorem{definition}{Definition}

\newtheorem{property}{Property}

\title{Can we trust our models?\\ Epistemic calibration in second-order classification}
\author{
 Arthur Hoarau\\
 Université de Lorraine, CentraleSupélec\\
 Loria, CNRS,  Metz, France\\
 \texttt{arthur.hoarau@centralesupelec.fr} \\
}

\begin{document}

\maketitle

\begin{abstract}
Uncertainty estimation is critical for deploying machine learning models in high-stakes settings. However, classical calibration only assesses the reliability of predicted probabilities and does not evaluate whether epistemic uncertainty estimates are themselves trustworthy. This limitation is particularly relevant for second-order classification models.
We introduce epistemic calibration, a principled criterion that measures whether reported epistemic uncertainty faithfully reflects the dispersion of model predictions around the ground truth. We show that epistemic calibration is a strictly stronger notion than classical calibration and captures failure modes invisible to standard metrics. We relate this work to the existing literature through an impossibility theorem that holds under the epistemic calibration hypothesis. To operationalize this concept, we propose the Expected Epistemic Calibration Error (EECE), which we prove to be a consistent estimator of a True Epistemic Calibration Error (TECE). Experiments across a broad range of uncertainty quantification methods show that epistemic calibration is a coherent and meaningful criterion and reveal substantial differences across methods, despite similar predictive performance.
\end{abstract}

\section{Introduction}\label{section:intro}

Research in machine learning has long focused on improving model raw performance. However, with the recent breakthroughs in artificial intelligence (vision, recommendation, language models, etc.) and its integration into an ever-increasing number of sometimes critical systems, new challenges are emerging. Growing attention is now being paid to the ecological, economic, and societal costs of AI~\cite{Time2023}, as well as to the demand for more frugal models. For models already capable of achieving near-optimal performance, it becomes crucial to capture and report the uncertainty associated with the most sensitive local predictions, even if they are rare. A single failure can be fatal in safety-critical systems, such as autonomous driving, medical or space applications. The recent revolution of large language models and their democratization have also raised pressing concerns about trust, particularly in the face of the overconfidence that some models display in their predictions. In most applications, so-called ``black-box'' models are implemented, and research partly focuses on making more transparent the predictions of these sometimes very complex models~\cite{niculescu2005,guo2017,kumar2018}.

In this setting, evaluating machine learning models requires going beyond raw predictive performance and explicitly accounting for uncertainty and trust. This motivates a multi-level view of calibration. At level-0, the \emph{accuracy} measures the correctness of the predictions or how often a model makes the correct prediction. At level-1, the \emph{uncertainty} measures the lack of confidence of the model, (i.e. the reliability of its predictions).
At level-2, \emph{trust} measure the confidence that users can place in a model’s predictions, behaviour, and decisions. Each of these levels can be calibrated. Calibration~\cite{niculescu2005, Vaicenavicius2019} refers to the degree to which a model’s outputs, confidence estimates, or assessments are aligned with reality or true outcomes. 

In probabilistic classification, a standard and widely used measure of predictive uncertainty is the Shannon entropy~\cite{Shannon1948} of the conditional predictive distribution. This somewhat simplistic view assumes that model uncertainty increases as the predictive distribution approaches uniformity (i.e., when all classes are equally probable). While consistent, this approach remains incomplete: it does not distinguish between a model that hesitates between two classes because the problem is inherently difficult, and a model that is uncertain simply because it lacks sufficient training.

In the literature, two primary types of uncertainty are typically distinguished~\cite{Hora1996}: aleatoric uncertainty (data uncertainty) and epistemic uncertainty (model uncertainty). Aleatoric uncertainty arises from the stochastic nature of the data-generating process, while epistemic uncertainty is associated with a lack of knowledge. The former is generally considered irreducible, whereas the latter can be mitigated by acquiring additional data. 
To disentangle aleatoric and epistemic uncertainty, numerous methods have been proposed in recent years~\cite{eyke2019}. These methods can be grouped into four main families of models.

\begin{figure}
    \centering
    \begin{subfigure}[c]{0.52\linewidth}
    \includegraphics[width=0.49\linewidth]{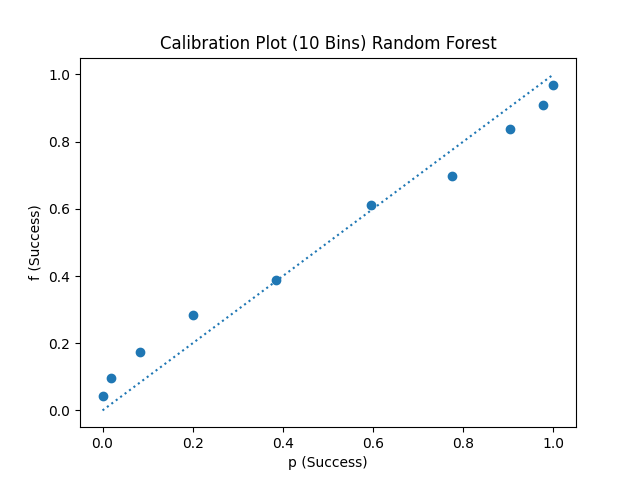}
    \includegraphics[width=0.49\linewidth]{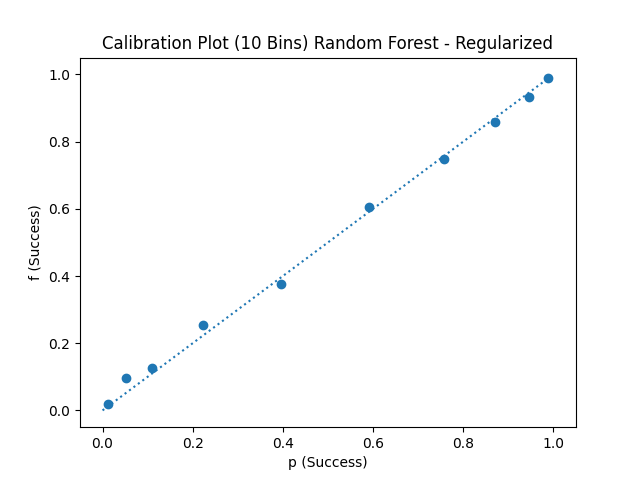}
    \caption{}\label{fig:random_f_cal}
    \end{subfigure}
    \begin{subfigure}[c]{0.19\linewidth}
    \vspace{12pt}
    \includegraphics[width=1.02\linewidth]{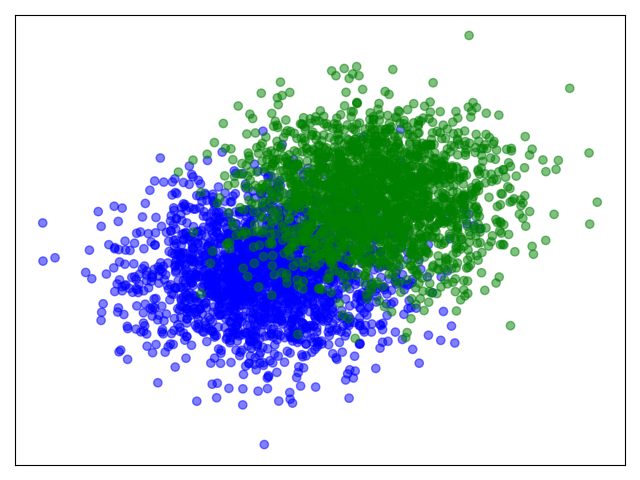}
    \vspace{0pt}
    \caption{}\label{fig:dataset}
    \end{subfigure}
    \begin{subfigure}[c]{0.27\linewidth}
    \includegraphics[width=\linewidth]{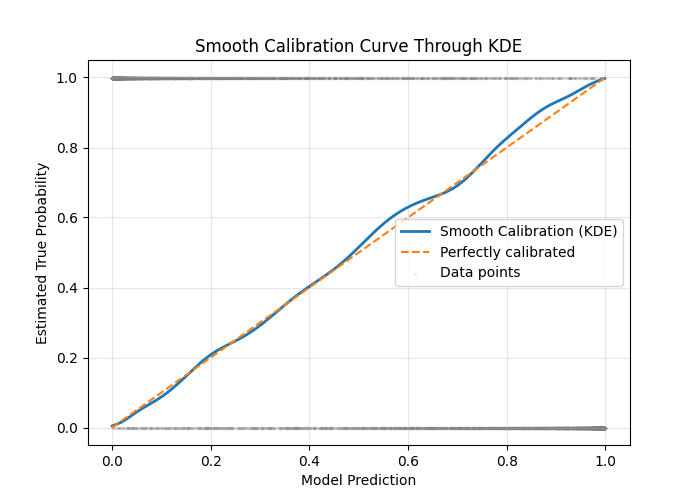}
    \caption{}\label{fig:cal_kde}
    \end{subfigure}
    \caption{\ref{fig:random_f_cal}: Calibration plot with bins \emph{vs.} Random Forest. Regularization on depth or leaf density improves calibration. \ref{fig:dataset}: Toy Dataset used in the following to illustrate calibration and epistemic calibration. Two-class partially overlapping Gaussian kernels. \ref{fig:cal_kde}: Smooth calibration curve through Kernel Density Estimation for an almost perfectly calibrated Bayesian Logistic Regression.}
\end{figure}

Bayesian methods compute a posterior distribution over a model’s parameters and construct a second-order distribution over class probabilities via Bayesian model averaging~\cite{kendall2017,Depeweg2018,valiuddin2025a} 
\begin{equation}
	p(h \mid \mathcal{D}) \propto p(h) p(\mathcal{D} \mid h).
\end{equation}
Uncertainty quantification in machine learning is traditionally grounded in Bayesian learning, where $p(h \mid \mathcal{D})$ denotes the posterior probability of a specific model $h \in \mathcal{H}$ given a dataset $\mathcal{D}$. Intuitively, $p(\cdot \mid \mathcal{D})$ reflects the learner’s state of knowledge, and thus its epistemic uncertainty: the more peaked this distribution is (i.e., one model is much more likely than the others), the less epistemically uncertain the learner is about $h^*$ being the optimal model. Crucially, $h^*$ being the best model does not imply that its prediction $\hat{y} = h^*(x)$ or $p(\hat{y} \mid x)$ is certain from an aleatoric perspective. In other words, a model can be epistemically confident that the task itself is intrinsically aleatorically difficult.
Despite its strong theoretical foundations, Bayesian learning has also faced substantial criticism~\cite{fellaji2024}, particularly due to its restrictive assumptions and the reliance on empirical approximations needed to make inference tractable.

Ensemble methods capture epistemic uncertainty through ensemble diversity. This can be achieved by randomizing the training data via bootstrapping, varying neural network architectures using techniques such as MC Dropout, or randomizing the optimizer, as in Deep Ensembles~\cite{Lakshminarayanan2017, Mobiny2021}. Epistemic uncertainty is high when the disagreement among estimators is large, whereas aleatoric uncertainty is high when the mean entropy of the estimators is high. It has been shown that some of these techniques can be interpreted as approximations of the second-order posterior distribution discussed previously~\cite{kendall2017}. 
Although intuitive and widely applicable, this family of methods has also been criticized regarding the nature of the uncertainty it captures~\cite{jimenez2026}.

Evidential deep learning adopts a frequentist perspective and aims to minimize both model error and epistemic uncertainty, which is represented by a second-order probability, without requiring the specification of a prior or posterior over the parameters. For multi-class classification problems, the Dirichlet distribution is typically employed~\cite{Sensoy2018, Ulmer2023}. 
Despite achieving reasonable performance, this recent family of methods has faced several criticisms~\cite{Bengs2023}. In particular, 
some expected behaviours have been shown to be impossible to achieve~\cite{jurgens2024}, leading to the general conclusion that evidential deep learning is unsuitable for proper epistemic uncertainty quantification.

Density-based and distance-based methods typically model aleatoric and epistemic uncertainty directly over the variable space~\cite{Nguyen2022, Hoarau2024-ml}. However, these methods can also construct a second-order distribution in a post-processing step~\cite{Charpentier2020}. Typically, epistemic uncertainty associated with a prediction is high when its likelihood is low for all classes in the dataset, whereas aleatoric uncertainty is high when the instance is likely to belong to multiple classes. The main drawback of such methods is that epistemic uncertainty is unbounded or not theoretically justified, although they can at least produce a coherent ranking of predictions.

Because of the intrinsic limitations of classical probability theory in capturing different forms of ignorance and imprecision, alternative mathematical frameworks have been developed~\cite{GUO2024}. Belief function theory, for instance, can represent multiple levels of ignorance. Imprecise probabilities enable the construction of credal sets, formalizing lower and upper bounds for each probability. Likewise, fuzzy-set theory provides a way to represent fuzzy class boundaries.

Despite the rich body of work devoted to uncertainty quantification and the diversity of existing modelling approaches, comparatively little attention has been paid to how well epistemic uncertainty estimates are calibrated. In particular, while many methods aim to represent second-order uncertainty, there is a lack of principled tools to assess whether these representations faithfully reflect the true degree of epistemic uncertainty and can therefore be trusted. Addressing this gap is the focus of this paper:
\begin{itemize}
    \item We introduce a novel notion of \emph{epistemic calibration} to ensure the trustworthiness of second-order classification models.
    \item We relate our work to the existing literature through an impossibility theorem under epistemic calibration hypothesis.
    \item We propose the \emph{Expected Epistemic Calibration Error} as a metric to quantify the degree of epistemic calibration of a given model.
    \item We prove that the \emph{Expected Epistemic Calibration Error} is a consistent estimator of the \emph{True Epistemic Calibration Error}.
    \item We identify and analyse some models that exhibit particularly strong or weak epistemic calibration.
\end{itemize}
The paper is structured as follows. Section~\ref{section:background} reviews classical calibration, the expected calibration error, and the quantification of epistemic uncertainty. Section~\ref{section:ep_cal} introduces our epistemic calibration framework in relation to an existing literature, along with the Expected Epistemic Calibration Error. Section~\ref{section:xp} presents a series of experiments, ranging from expected behaviour analyses to a broader benchmark of existing methods. Finally, Section~\ref{section:conclusion} concludes the article.

\section{Calibration \& Epistemic Uncertainty}\label{section:background}

In this section we first recall the definition of calibration and the expected calibration error. We then review the representation and quantification of epistemic uncertainty in a machine learning setting.

\subsection{Calibration}

Let $(X,Y)$ be a pair of random variables (we omit the notion of random vectors for the sake of simplicity), where $X$ takes values in the input space 
$\mathcal{X} \subseteq \mathbb{R}^p$ and $Y$ takes values in the label set 
$\mathcal{Y} = \{0,1\}$. Denote the joint distribution by $P_{X,Y}$. 
We define a dataset $\mathcal{D}$ of $N$ samples
\begin{equation}
\mathcal{D} := \{(x_n, y_n)\}_{n=1}^N,
\end{equation}
where $(x_n, y_n)$ are i.i.d draws from $P_{X,Y}$. The ground truth conditional class-probability is defined by
\begin{equation}
P(Y = y \mid X = x).
\end{equation}

Let $h : \mathcal{X} \to [0,1]$ be a predictive model in the hypothesis space $h\in\mathcal{H}$ where $h(x)$ represents 
an estimate of the probability $P(Y=1 \mid X = x)$. 

\begin{definition}[Calibration]
We say that $h$ is \emph{calibrated} iff, for any $\hat{p} \in [0,1]$, we have
\begin{equation}
P(Y=1 \mid h(X)= \hat{p}) = \hat{p}.
\end{equation}
\end{definition}
Rephrased, the predictive distribution of the classifier should reliably represent its uncertainty. In the general case of multi-class classification, different degrees of calibration can be distinguished~\cite{brocker2009, guo2017, Vaicenavicius2019}. However, we do not consider such distinctions in this paper, as these measures collapse into the one presented above in our binary classification setup. Another question eluded to in this paper and considered out of scope is the recalibration task, which involves building a calibrated classifier either \emph{ad hoc} during training~\cite{kendall2017, Lakshminarayanan2017},  or \emph{post hoc} as a recalibration of an already trained classifier~\cite{guo2017, zadrozny2001, Vaicenavicius2019}.

A common way to estimate $P(Y=1 \mid h(X)= \hat{p})$ is to use binning~\cite{zadrozny2001,naeini2015}, as presented for Random Forests in Figure~\ref{fig:random_f_cal} for a given dataset (cf. Figure~\ref{fig:dataset}). A model is considered perfectly calibrated if the predicted probability of success matches the true frequency in each bin (represented by a point in the figure). This figure highlights that a regularized Random Forest is better calibrated. A regularized tree (through \emph{pre-} or \emph{post-}pruning) is shallower, but most importantly its leaves are more dense, which naturally better matches the true frequency in each bin.
Another body of the literature~\cite{popordanoska2022, jurgens2025} advocates for a continuous spectrum instead of the discrete binning strategy. One can use smooth density estimations to obtain differentiable functions. Such an example is presented for illustrative purposes in Figure~\ref{fig:cal_kde} using a Kernel Density Estimation (with a Beta Kernel and a bandwidth of 0.2) with a Bayesian Logistic Regression.

\subsubsection{Expected Calibration Error}

The Expected Calibration Error (ECE) quantifies the difference between predicted probabilities and empirical frequencies.
Let $\{I_m\}_{m=1}^M$ be a partition of the interval $[0,1]$ into $M$ disjoint bins. For each bin $I_m$, let define the index set
\begin{equation}
B_m := \{ n \in \{1,\dots,N\} \mid h(x_n) \in I_m \}.
\end{equation}
Let the empirical accuracy and the confidence within bin $I_m$ be
\begin{equation}
\mathrm{acc}(I_m) := \frac{1}{|B_m|}\sum_{n \in B_m}\mathds{1}_{\{y_n = 1\}}, \quad \mathrm{conf}_h(I_m) := \frac{1}{|B_m|}\sum_{n \in B_m} h(x_n).
\end{equation}

\begin{definition}[Expected Calibration Error]
The \emph{ECE} is defined as
\begin{equation}
\mathrm{ECE}_M := \sum_{m=1}^M \frac{|B_m|}{N}\left| \mathrm{acc}(I_m) - \mathrm{conf}_h(I_m) \right|.
\end{equation}
\end{definition}
It approximates the true calibration error by partitioning the predicted probabilities into $M$ bins and computing a weighted average of the discrepancy between empirical accuracy and mean predicted confidence within each bin. The ECE provides a discrete approximation of the True Calibration Error.
\begin{definition}[True Calibration Error]
The \emph{TCE} is defined as
\begin{equation}
\mathrm{TCE} :=\mathbb{E}_X \left[\left|P(Y=1 \mid h(X)) - h(X)\right|\right],
\end{equation}
which vanishes if and only if the model is perfectly calibrated. 
\end{definition}

In practice, the ECE estimates the TCE by discretizing the predicted probability interval. 
Although this discretization introduces a small bias, the ECE is a consistent estimator of the TCE, converging to the true value as the number of bins and samples increase.
\begin{theorem}\label{theorem:original}
    Let $r(\hat{p}) := P(Y=1\mid h(X) = \hat{p})$ be a Lipschitz continuous calibration function and $\{I_m\}^M_{m=1}$ be a partition of the interval $[0,1]$ such that $\underset{M\rightarrow \infty}{\lim} \underset{\;1 < m < M}{\max} \underset{\;a, b \in I_m}{\sup} ||a - b||_2 = 0$. Then
    \begin{equation}
        \underset{M\rightarrow \infty}{\lim}\underset{N\rightarrow \infty}{\lim} \mathrm{ECE}_M = \mathrm{TCE},
    \end{equation}
    with limits in the almost sure sense.
\end{theorem}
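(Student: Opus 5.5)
The proof splits into two limits taken in order. First, for fixed $M$, I let $N\to\infty$ and identify the almost-sure limit of $\mathrm{ECE}_M$ as a deterministic ``binned'' calibration error $\mathrm{TCE}_M$. Second, I let $M\to\infty$ and show $\mathrm{TCE}_M\to\mathrm{TCE}$, using the Lipschitz continuity of $r$ and the vanishing mesh of the partition.

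\emph{Step 1 (fixed $M$, $N\to\infty$).} Write $\frac{|B_m|}{N}=\frac1N\sum_n \mathds{1}_{\{h(x_n)\in I_m\}}$. Write $\frac{|B_m|}{N}\mathrm{acc}(I_m)=\frac1N\sum_n \mathds{1}_{\{y_n=1,\,h(x_n)\in I_m\}}$, and $\frac{|B_m|}{N}\mathrm{conf}_h(I_m)=\frac1N\sum_n h(x_n)\mathds{1}_{\{h(x_n)\in I_m\}}$. Each of these is an empirical mean of bounded i.i.d.\ variables. By the strong law of large numbers they converge a.s.\ to $\pi_m:=P(h(X)\in I_m)$, to $P(Y=1,h(X)\in I_m)$ and to $\mathbb{E}[h(X)\mathds{1}_{\{h(X)\in I_m\}}]$, respectively. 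Since $\frac{|B_m|}{N}\,|\mathrm{acc}-\mathrm{conf}|=\bigl|\frac{|B_m|}{N}\mathrm{acc}-\frac{|B_m|}{N}\mathrm{conf}\bigr|$, there is no division by a possibly empty bin count, so the case $\pi_m=0$ causes no trouble. Taking a finite union of null sets over $m$ gives
\[
\lim_{N\to\infty}\mathrm{ECE}_M=\mathrm{TCE}_M:=\sum_{m=1}^M\Bigl|\mathbb{E}\bigl[(r(h(X))-h(X))\mathds{1}_{\{h(X)\in I_m\}}\bigr]\Bigr| \quad\text{a.s.}
\]
Here I used the tower property $P(Y=1,h(X)\in I_m)=\mathbb{E}[r(h(X))\mathds{1}_{\{h(X)\in I_m\}}]$.

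\emph{Step 2 ($M\to\infty$).} Set $g(\hat p):=r(\hat p)-\hat p$, which is Lipschitz with constant $L+1$ if $r$ is $L$-Lipschitz. By the triangle inequality, $\mathrm{TCE}_M\le \sum_m\mathbb{E}[|g(h(X))|\mathds{1}_{\{h(X)\in I_m\}}]=\mathrm{TCE}$. For the reverse bound, fix $c_m\in I_m$. On $\{h(X)\in I_m\}$ we have $|g(h(X))-g(c_m)|\le (L+1)\delta_M$, where $\delta_M$ is the maximal bin diameter. Hence both $\bigl|\mathbb{E}[g\mathds{1}_{I_m}]\bigr|$ and $\mathbb{E}[|g|\mathds{1}_{I_m}]$ lie within $(L+1)\delta_M\pi_m$ of $|g(c_m)|\pi_m$. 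Summing over $m$ gives $|\mathrm{TCE}-\mathrm{TCE}_M|\le 2(L+1)\delta_M\to 0$ by the mesh hypothesis. Since $\mathrm{TCE}_M$ is deterministic, this yields $\lim_M\lim_N\mathrm{ECE}_M=\mathrm{TCE}$ almost surely.

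\emph{Main obstacle.} The delicate point is Step 2, namely controlling the gap between $|\mathbb{E}[g\,\mathds{1}]|$ and $\mathbb{E}[|g|\,\mathds{1}]$ inside a bin. This is exactly where the Lipschitz assumption and the vanishing bin width are needed: they force $g$ to be nearly constant on each bin, so cancellations within a bin are negligible. A minor technical point is that the hypothesis only bounds interior bins ($1<m<M$). I would handle this by assuming the endpoint bins also shrink, or by noting that the same estimate applies if their diameters vanish too; otherwise the statement needs that mild reinterpretation. The $r$ being defined only on the support of $h(X)$ is harmless, since all expectations are taken with respect to that law.
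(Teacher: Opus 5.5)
Your proof is correct. The paper does not prove this statement itself; it is quoted as a known result. Its proof of the EECE analogue, Theorem~\ref{theorem:convergence}, follows the standard template, which specialised to the ECE reads as follows. Estimate $\hat\pi_m,\hat r_m,\hat h_m$ separately by the strong law. Pass to $\sum_m\bar\pi_m|\bar r_m-\bar h_m|$ by continuous mapping. Replace $\bar r_m$ by $r(\bar h_m)$ at cost $K\,\mathrm{diam}(I_m)$, using $\bar h_m\in I_m$; this is why the paper asks for convex bins in the EECE version. Finally, appeal to Riemann--Stieltjes sums to get $\sum_m\bar\pi_m|r(\bar h_m)-\bar h_m|\to\mathbb{E}|r(h(X))-h(X)|$.

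You keep the same two-stage skeleton but execute both stages differently. Absorbing the weight $|B_m|/N$ into unnormalised bin sums means one strong law per bin suffices. Bins with $\pi_m=0$ then cost nothing, whereas there $\bar r_m$ is undefined and the continuous-mapping step needs separate care. Comparing $|\mathbb{E}[g\mathds{1}_{I_m}]|$ with $\mathbb{E}[|g|\mathds{1}_{I_m}]$ through an arbitrary representative point replaces the Riemann--Stieltjes step by the explicit bound $0\le\mathrm{TCE}-\mathrm{TCE}_M\le 2(L+1)\delta_M$. This bound also shows that the binned limit can only underestimate the TCE, and it needs neither convex bins nor the special point $\bar h_m$.

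What the paper's modular route buys is transferability. Your absorption trick uses that $\hat\pi_m|\hat r_m-\hat h_m|$ is positively homogeneous of degree one in the bin sums. That fails for the squared per-bin statistic $\hat\nu_m$ of the EECE, where one is back to ratio estimators and continuous mapping.

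Your worry about the index range $1<m<M$ is justified, and the fix is necessary rather than cosmetic. The paper's own template silently uses $\max_{1\le m\le M}$, and the literal statement is false. Take $Y\sim\mathrm{Bernoulli}(1/4)$ independent of $X$, $h(X)$ uniform on $[0,1/2]$, and $I_1=[0,1/2)$ for every $M$, with the remaining bins shrinking in $[1/2,1]$. Then $\mathrm{TCE}_M=|1/4-\mathbb{E}[h(X)]|=0$ for all $M$, while $\mathrm{TCE}=1/8$.

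Two small patches are needed:
\begin{itemize}
\item The iterated almost-sure limit needs the strong-law events for all $M$ simultaneously. That is a countable, not just finite, union of null sets.
\item $g(c_m)$ is a point evaluation, not an expectation, so your remark about the support does not cover it. If $r$ is only defined on the support of the law of $h(X)$, choose $c_m$ in $I_m$ intersected with that support whenever $\pi_m>0$ (possible since the complement of the support is null). Alternatively, extend $r$ to a Lipschitz function on $[0,1]$.
\end{itemize}
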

Furthermore, it has been shown by~\cite{Vaicenavicius2019} that ECE is a consistent estimator of TCE for any function (other than the $L_1$ norm used in this paper) continuous and uniformly continuous in its first argument.

\subsection{Epistemic uncertainty}\label{section:ep_unc}

Let $\Pi$ be a second-order model $\Pi \in \mathcal{P}(\mathcal{H})$ and $h\sim\Pi$ be a hypothesis sampled from $\Pi$. Depending on the context, $\Pi$ may represent a Bayesian posterior over models (\emph{Bayesian Learning}), an ensemble sampling distribution (\emph{Ensemble Learning}), a credal selection rule (\emph{Credal Sets}), or another second-order uncertainty representation (\emph{Evidential Deep Learning}, \emph{Dempster-Shafer Theory}). In this paper, we take the second-order distribution $\Pi$ as given, and our theoretical analysis is conducted directly on this object. We therefore do not commit to any particular method for constructing $\Pi$, as the framework applies broadly across the paradigms described above. Practical considerations regarding how such distributions can be obtained in practice are discussed in the experimental section.

For any input $x \in \mathcal{X}$, each hypothesis $h \in \mathcal{H}$ produces 
a predicted posterior probability $h(x)$ estimating $ P(Y=1\mid X =x)$ which is averaged into the prediction of the model $\Delta(x) := \mathbb{E}_{h\sim\Pi}[h(x)]$. 

\begin{definition}[Variance-based estimation]
The estimated \emph{epistemic uncertainty} at $x$ is defined as the variance of the posterior predictive distribution
\begin{equation}
EU(x) := \mathrm{Var}_{h\sim\Pi}\big[h(x)\big].
\end{equation}
\end{definition}

This definition does not require a parametrized hypothesis space, as in Bayesian learning~\cite{kendall2017}. It instead accommodates more general representations of epistemic uncertainty, such as credal sets, ensembles of classifiers, and belief-based representations.
Another classical representation of epistemic uncertainty, intrinsic to the Bayesian family of methods, is the information-theoretic decomposition. Let $\mathcal{D}$ denote the training dataset, $\theta$ the model parameters, $p(y|x,\theta)$ the predictive distribution given parameters $\theta$, $p(\theta|\mathcal{D})$ for the posterior over parameters given the data, and $p(y|x,\mathcal{D}) := \mathbb{E}_{p(\theta|\mathcal{D})}[p(y|x,\theta)]$ for the posterior predictive distribution. The model uncertainty can be decomposed as follows
\begin{equation}
    \underbrace{H[p(y|x,\mathcal{D})]}_{\text{Total Uncertainty}} = \;\;\underbrace{\mathbb{E}_{p(\theta|\mathcal{D})}[H\left[p(y|{x},\theta)]\right]}_{\text{Aleatoric unc.}}\;\;\; + \underbrace{I(y,\theta|{x},\mathcal{D})}_{\text{Epistemic unc.}},
\end{equation}
where $H(\cdot)$ and $I(\cdot\mid\cdot)$ respectively denote Shannon's entropy and mutual information. This decomposition is not discussed further in this paper. Instead, we adopt the broader definition based on the variance of model predictions.

While classical calibration ensures that the mean prediction is reliable, 
we also want the uncertainty estimate to be reliable. We thus introduce in the following section the notion of epistemic calibration.

\section{Epistemic Calibration}\label{section:ep_cal}

Classical calibration ensures that each single prediction is calibrated on average. This notion can be extended to second-order models by considering mean calibration. However, a mean-calibrated second-order model may fail to calibrate predictions individually: two predictions sharing the same mean but differing in variance may not be calibrated on average individually. In this section, we formally introduce the notion of \emph{epistemic calibration}, under which each individual prediction is calibrated.
\begin{definition}
A second-order predictor $\Pi$ is epistemically calibrated iff, $\forall\, \Psi \in \mathcal{P}([0, 1])$
\begin{equation}\label{eq:ep-cal}
    P(Y=1\mid \Pi(X) = \Psi) = \Delta_\Psi,
\end{equation}
where $\Delta_\Psi = \mathbb{E}_{\phi\sim\Psi}[\phi]$ is the mean prediction of the fixed distribution.
\end{definition}
Rather than proceeding directly through the definition, this section builds around equation~\eqref{eq:ep-cal} to progressively unveil the components and properties of epistemic calibration. We first relate epistemic calibration to the existing literature through an impossibility theorem. We then introduce the True Epistemic Calibration Error and the Expected Epistemic Calibration Error as a consistent estimator thereof. Further experiments show that epistemic calibration is a natural property that some second-order models tend to satisfy.

\subsection{An impossibility theorem}

Recent literature has raised concerns about the disentanglement between aleatoric and epistemic uncertainty~\cite{kirchhof2025, smith2025}. In particular, expected behaviours may not hold when increasing the number of samples or the complexity of the model~\cite{fellaji2024}, and strong correlations between epistemic and aleatoric components have been empirically observed~\cite{mucsanyi2024}.
We focus here on a related line of work~\cite{jimenez2026}, where a bias-variance decomposition of epistemic uncertainty has been proposed. For simplicity, let $p_x:= P(Y=1 \mid X =x)$ denote the true conditional distribution and $\hat{p}_{N,\gamma,x} := h_{N,\gamma}(x)$ the predictive distribution where the dataset $\mathcal{D}$ of size $N$ and $\gamma$ are random variables respectively reflecting the data uncertainty and the procedural uncertainty. Given $\ell : \mathcal{P}([0,1]) \times [0,1] \rightarrow \mathbb{R}$ a strictly proper scoring rule inducing a divergence $D$, the authors in~\cite{jimenez2026} decompose the epistemic uncertainty into 
\begin{equation}
    \underbrace{D_{\ell}(p_x, \bar{p}_x)}_{\text{generalized bias}} + \quad \underbrace{\mathbb{E}_{N,\gamma}[D_{\ell}(\bar{p}_x, \hat{p}_{N,\gamma, x})]}_{\text{generalized variance}},
\end{equation}
where 
\begin{equation}
    \bar{p}_x := \underset{p_x\in\mathcal{P}([0,1])}{\arg \min}\; \mathbb{E}_{N,\gamma}[D_{\ell}(p_x, \hat{p}_{N,\gamma,x})],
\end{equation}
is the Bregman centroid refereed to as the \emph{dual mean}.
In this work the authors argue that ``recent and highly-cited epistemic uncertainty estimation methods are fundamentally incomplete'', a claim they attribute to the aforementioned bias term, which proves intractable in practice.
Therefore, we formulate the following impossibility theorem stating that an epistemically calibrated second-order model cannot encode bias in its dispersion (i.e. the epistemic uncertainty estimation).
\begin{theorem}\label{theorem:impossibility}
    For an epistemically calibrated predictor $\Pi$, the spread of the distribution carries no additional information about $Y$ beyond what the mean already encodes.
\end{theorem}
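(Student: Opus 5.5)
The plan is to turn the informal statement of Theorem~\ref{theorem:impossibility} into a conditional-independence claim and derive it directly from the definition of epistemic calibration in \eqref{eq:ep-cal}. Write $\Pi(X)$ for the random second-order prediction, $\Delta(X)=\Delta_{\Pi(X)}$ for its mean, and $EU(X)=\mathrm{Var}_{\phi\sim\Pi(X)}[\phi]$ for its spread. The precise statement we aim for is
\begin{equation*}
P\bigl(Y=1 \mid \Delta(X), EU(X)\bigr) = P\bigl(Y=1\mid \Delta(X)\bigr) = \Delta(X) \quad \text{a.s.}
\end{equation*}
Since $Y$ is binary, its conditional law given any $\sigma$-algebra is determined by this conditional probability. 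Hence the statement is equivalent to $Y \perp EU(X) \mid \Delta(X)$, and more generally to $Y \perp \Pi(X)\mid \Delta(X)$.

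\textbf{Step 1.} By \eqref{eq:ep-cal}, $E[\mathds{1}_{\{Y=1\}}\mid \Pi(X)] = \Delta(X)$ almost surely. One must check that the definition, stated pointwise in $\Psi$, is read as a statement about a regular version of the conditional probability. We adopt that reading explicitly.

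\textbf{Step 2.} Let $\mathcal{G}$ be any $\sigma$-algebra with $\sigma(\Delta(X))\subseteq\mathcal{G}\subseteq\sigma(\Pi(X))$. The main example is $\mathcal{G}=\sigma(\Delta(X),EU(X))$, which is contained in $\sigma(\Pi(X))$ because the mean and variance are measurable functionals of $\Psi$. Apply the tower property:
\begin{equation*}
E[\mathds{1}_{\{Y=1\}}\mid\mathcal{G}] = E\bigl[E[\mathds{1}_{\{Y=1\}}\mid \Pi(X)]\bigm|\mathcal{G}\bigr] = E[\Delta(X)\mid\mathcal{G}] = \Delta(X),
\end{equation*}
where the last equality holds because $\Delta(X)$ is $\mathcal{G}$-measurable. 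Taking $\mathcal{G}=\sigma(\Delta(X))$ also gives classical (mean) calibration. This yields the displayed identity.

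\textbf{Step 3.} Interpret the result in the language of the bias--variance decomposition of \cite{jimenez2026}. Conditional on the mean prediction, the dispersion of $\Pi(X)$ is independent of $Y$, so no functional of the spread can predict $Y$ better than $\Delta(X)$. In particular, it cannot estimate a bias term $p_x-\Delta(x)$ that would be detectable from $Y$.

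To phrase the "no additional information" claim quantitatively, compare the expected score of any predictor $g(\Delta(X),EU(X))$ under the strictly proper scoring rule $\ell$. Since the Bayes act is the conditional probability, the optimal $g$ is $\Delta(X)$, independent of $EU$. Equivalently, the mutual information $I(Y;EU(X)\mid\Delta(X))$ vanishes.

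The main obstacle is not the computation, which is a one-line tower argument, but making the informal statement precise: choosing the right formalization (conditional independence given the mean) and handling the measure-theoretic conditioning on the event $\{\Pi(X)=\Psi\}$, which typically has probability zero. I would handle this by defining epistemic calibration via regular conditional probabilities. I would also remark that Step~2 does not claim $\Delta(X)=p_X$ pointwise. Bias relative to $p_x$ may persist, but it is invisible to, and not encoded by, the spread.
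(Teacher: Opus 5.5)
Your proposal is correct and follows essentially the same route as the paper. The paper's proof also applies the tower property to the calibration identity $P(Y=1\mid\Pi(X))=\Delta(X)$: it first conditions on $\Delta(X)$ to get mean calibration, then on $(\Delta(X),EU(X))$ to conclude $Y\perp EU(X)\mid\Delta(X)$. Your version, with a single $\sigma$-algebra between $\sigma(\Delta(X))$ and $\sigma(\Pi(X))$ and an explicit reading via regular conditional probabilities, is a cleaner and slightly more general packaging of that same argument.
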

Proof for Theorem~\ref{theorem:impossibility} is provided in Appendix~\ref{app:proofs}, and the following properties can be derived.
\begin{property}
    If a second-order predictor $\Pi$ is epistemically calibrated, then the two following property hold
    \begin{itemize}
        \item Mean calibration: $P(Y=1\mid \Delta(X) = \hat{p}) = \hat{p}$, 
        \item Epistemic independence: $P(Y=1\mid \Delta(X) = \Delta_\Psi, EU(X) = EU_\Psi) = P(Y=1\mid \Delta(X) = \Delta_\Psi)$.
    \end{itemize}
\end{property}
Mean calibration shows that epistemic calibration is a stronger notion than classical calibration while epistemic independence is the property that guarantees that the variance of the posterior cannot encode information about $Y$ beyond what is already encoded in the mean (i.e., $Y \perp EU(X) \mid\Delta(X)$). An important clarification is that in the cited work, the authors refer to bias as a divergence from the ground truth probability $P(Y = 1 \mid X = x)$, whereas in this paper we adopt the widely used convention of averaging around fixed model predictions \[P(Y=1\mid h(X) =\hat{p}) = \mathbb{E}[P(Y=1\mid X)\mid h(X) = \hat{p}],\] in order to make the properties derived in the following sections tractable.
In fact, Theorem~\ref{theorem:impossibility} is not contradictory but rather complementary to their work. The authors claim that epistemic uncertainty \emph{globally cannot} be estimated due to generalized bias, whereas we claim that second-order models \emph{locally can} be trusted under epistemic calibration. Furthermore, we introduce in the following section a procedure to test for this property.

\subsection{The Expected Epistemic Calibration Error}

In this section, we reformulate epistemic calibration so as to make the variance of the second-order distribution appear explicitly. Epistemic calibration then becomes a property of the epistemic uncertainty estimate (i.e., the variance of the posterior), which should exactly match the expected squared error of the first-order drawn predictions.
\begin{proposition}\label{proposition:ep-cal}
A second-order predictor $\Pi$ is epistemically calibrated iff, $\forall\, \Psi \in\mathcal{P}([0, 1])$
\begin{equation}\label{eq:ep-cal2}
    \mathbb{E}_{\phi \sim \Psi}\!\left[\left(P\!\left(Y=1 \mid \Pi(X) = \Psi\right) - \phi\right)^2\right] = EU_\Psi,
\end{equation}
where $EU_\Psi = \mathrm{Var}_{\phi\sim\Psi}[\phi]$ is the variance of the fixed distribution.
\end{proposition}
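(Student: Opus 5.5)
The approach is a bias--variance decomposition of the left-hand side of \eqref{eq:ep-cal2}. Fix $\Psi\in\mathcal{P}([0,1])$ and abbreviate the constant $c_\Psi := P(Y=1\mid \Pi(X)=\Psi)$. Since $c_\Psi$ does not depend on $\phi$, we can insert and subtract the mean $\Delta_\Psi=\mathbb{E}_{\phi\sim\Psi}[\phi]$ inside the square. Expanding gives
\begin{equation*}
\mathbb{E}_{\phi\sim\Psi}\!\left[(c_\Psi-\phi)^2\right] = (c_\Psi-\Delta_\Psi)^2 + 2(c_\Psi-\Delta_\Psi)\,\mathbb{E}_{\phi\sim\Psi}[\Delta_\Psi-\phi] + \mathbb{E}_{\phi\sim\Psi}\!\left[(\Delta_\Psi-\phi)^2\right].
\end{equation*}
The cross term vanishes because $\mathbb{E}_{\phi\sim\Psi}[\Delta_\Psi-\phi]=0$. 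The last term is exactly $EU_\Psi=\mathrm{Var}_{\phi\sim\Psi}[\phi]$. This gives the identity
\begin{equation*}
\mathbb{E}_{\phi\sim\Psi}\!\left[(c_\Psi-\phi)^2\right] = (c_\Psi-\Delta_\Psi)^2 + EU_\Psi,
\end{equation*}
valid for every $\Psi$. Both terms are finite since $\phi\in[0,1]$.

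From this identity both directions follow pointwise in $\Psi$. If $\Pi$ is epistemically calibrated, then $c_\Psi=\Delta_\Psi$ by \eqref{eq:ep-cal}, the squared bias vanishes, and \eqref{eq:ep-cal2} holds. Conversely, if \eqref{eq:ep-cal2} holds for $\Psi$, subtracting $EU_\Psi$ leaves $(c_\Psi-\Delta_\Psi)^2=0$. Since a real square vanishes only at zero, $c_\Psi=\Delta_\Psi$, which is \eqref{eq:ep-cal}. Because both \eqref{eq:ep-cal} and \eqref{eq:ep-cal2} are quantified over the same set of $\Psi$, the equivalence transfers to the universally quantified statements.

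The only delicate point is measure-theoretic: making sense of conditioning on the event $\{\Pi(X)=\Psi\}$, which is typically null, in the space $\mathcal{P}([0,1])$. I would handle this the same way the definition of epistemic calibration implicitly does. Take $c_\Psi$ to be a version of the regular conditional probability of $Y=1$ given the random measure $\Pi(X)$, and read both statements as holding for $P_{\Pi(X)}$-almost every $\Psi$. The algebraic identity above holds for every $\Psi$ once a version is fixed, so the equivalence holds almost surely with the same version on both sides. Beyond this bookkeeping, I expect no real obstacle: the proposition is essentially the statement that the mean is the unique minimiser of the expected squared error, so the expected squared error around $c_\Psi$ equals the variance exactly when $c_\Psi$ is the mean.
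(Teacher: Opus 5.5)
Your proof is correct and matches the paper's argument: the paper also inserts $\Delta_\Psi$, uses $\mathbb{E}_{\phi\sim\Psi}[\Delta_\Psi-\phi]=0$ to kill the cross term, obtains $(P(Y=1\mid\Pi(X)=\Psi)-\Delta_\Psi)^2+EU_\Psi$, and reads off the equivalence pointwise in $\Psi$. Your extra remark, reading the conditioning as holding for $P_{\Pi(X)}$-almost every $\Psi$ with a fixed version, is a sensible refinement that the paper leaves implicit.
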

The proof of Proposition~\ref{proposition:ep-cal} is provided in Appendix~\ref{app:proofs}.  Intuitively, among regions where the model reports an epistemic uncertainty of $EU_\Psi$, the actual variance of individual model predictions around the true frequency should equal that value.

With regard to the relationship with classical calibration. Epistemic calibration is a \emph{stronger} notion. Classical calibration only concerns the aggregated prediction $\Delta(x)$ while epistemic calibration additionally requires that the announced uncertainty $EU(x)$ correctly reflects the actual dispersion of models around the truth. A model can be classically calibrated while being epistemically miscalibrated if models disagree in a way that doesn't reflect their collective error.

Such as for standard calibration, epistemic calibration can be measured through binning.
Let $H = \{h_1, \dots, h_{|H|}\}$ be a finite set of classifiers and  $\{I_m\}_{m=1}^M$ be a partition of the joint prediction space $[0,1]^{|H|}$. This partition can be constructed via a Cartesian product of intervals (grid-based) or through data-dependent clustering (e.g., K-Means) to avoid empty bins.
Depending on the method, the total number of bins can grow exponentially with the ensemble size $|H|$, so in practice most bins $B_{m}$ may be empty unless $N \gg M$. This curse of dimensionality is an inherent limitation of the estimator. An additional study on the impact of the number of bins is provided in Appendix~\ref{app:xp}, demonstrating that satisfactory results can be achieved with a small number of bins. For each bin $I_{m}$, let define the index set
\begin{equation}
    B_{m} := \{n \in \{1, ..., N\}\mid (h(x_n))_{h\in H} \in I_{m} \}.
\end{equation}
Let the empirical accuracy, the confidence and the trust within bin $I_m$ be
\begin{equation}
\mathrm{acc}(I_m) := \frac{1}{|B_m|}\sum_{n \in B_m}y_n,\quad
\mathrm{conf}_h(I_m) := \frac{1}{|B_m|}\sum_{n \in B_m} h(x_n), \quad
    \mathrm{trust}(I_m) := \frac{1}{|B_m|}\sum_{n \in B_m} EU(x_n).
\end{equation}
The Expected Epistemic Calibration Error (EECE) quantifies the difference between predicted epistemic uncertainties and empirical expected quadratic errors.
\begin{definition}[Expected Epistemic Calibration Error]
The \emph{EECE} is defined as
\begin{equation}\label{eq:eece}
\mathrm{EECE}_{M,H} := \sum_{m=1}^M\frac{|B_m|}{N}\left|\left(\frac{1}{|H|}\sum_{h\in H}\left(\mathrm{acc}(I_m) - \mathrm{conf}_h(I_m)\right)^2\right) - \mathrm{trust}(I_m)\right|.
\end{equation}
\end{definition}
The EECE approximates the true epistemic calibration error by partitioning the joint prediction space into $M$ bins and computing a weighted average of the discrepancy between the empirical expected squared error and the mean reported epistemic uncertainty within each bin. It provides a discrete approximation of the true epistemic calibration error.
\begin{definition}[True Epistemic Calibration Error]
The \emph{TECE} is defined as
\begin{equation}
\mathrm{TECE} := \mathbb{E}_X\left[\left|\mathbb{E}_{h\sim\Pi}\!\left[\left(P\!\left(Y=1 \mid \Pi(X) \right) - h(X)\right)^2\right] - EU(X)\right|\right],
\end{equation}
which vanishes if and only if the model is perfectly epistemically calibrated.
\end{definition}
Similarly to the ECE and the TCE, the EECE estimates the TECE by discretizing a product space of joint probability interval. Although this discretization also introduces a bias, we show that the EECE is a consistent estimator of the TECE, converging to the true value as the number of bins and samples increase.
\begin{theorem}\label{theorem:convergence}
    Let $r(\Psi) := P(Y=1\mid \Pi(X) = \Psi) $ and $\nu(\Psi) := \mathbb{E}_{\phi \sim \Psi}\!\left[\left(P\!\left(Y=1 \mid \Pi(X) = \Psi\right) - \phi\right)^2\right]$ be two Lipschitz continuous calibration functions and $\{I_m\}_{m=1}^M$ be a partition of the joint prediction space $[0,1]^{|H|}$ into convex bins with metric $d$ such that $\underset{M\rightarrow \infty}{\lim} \underset{\;1 < m < M}{\max} \underset{\;a, b \in I_m}{\sup} d(a,b) = 0$. Then
    \begin{equation}
        \underset{M\rightarrow \infty}{\lim}\underset{|H|\rightarrow \infty}{\lim}\underset{N\rightarrow \infty}{\lim}  \mathrm{EECE}_{M,H} = \mathrm{TECE},
    \end{equation}
    with limits in the almost sure sense.
\end{theorem}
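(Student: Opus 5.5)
The plan mirrors the argument behind Theorem~\ref{theorem:original}: take the three limits in the stated order (first $N$, then $|H|$, then $M$), and in each step treat the per-bin quantities separately, using that $x\mapsto|x|$ is $1$-Lipschitz. First I rewrite the per-bin summand of \eqref{eq:eece}. Expanding the square about the mean confidence $\bar c(I_m):=\frac{1}{|H|}\sum_{h}\mathrm{conf}_h(I_m)$ gives
\begin{equation*}
\frac{1}{|H|}\sum_{h\in H}\left(\mathrm{acc}(I_m)-\mathrm{conf}_h(I_m)\right)^2=\left(\mathrm{acc}(I_m)-\bar c(I_m)\right)^2+\frac{1}{|H|}\sum_{h\in H}\left(\mathrm{conf}_h(I_m)-\bar c(I_m)\right)^2 .
\end{equation*}
This identity will be used in the $|H|$ and $M$ steps. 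Each bin summand is then a continuous function of finitely many bin averages (of $y_n$, of $h(x_n)$, and of $EU(x_n)$).

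\emph{Step 1 ($N\to\infty$, with $M$ and $H$ fixed).} For each $m$ with $P(\Pi(X)\in I_m)>0$, the strong law of large numbers gives $|B_m|/N\to P(I_m)$. It also gives $\mathrm{acc}(I_m)\to \mathbb{E}[Y\mid I_m]=\mathbb{E}[r(\Pi(X))\mid I_m]$, $\mathrm{conf}_h(I_m)\to\mathbb{E}[h(X)\mid I_m]$ and $\mathrm{trust}(I_m)\to\mathbb{E}[EU(X)\mid I_m]$, all almost surely. Bins of probability zero have $|B_m|=0$ almost surely and contribute nothing. The continuous mapping theorem then shows that $\mathrm{EECE}_{M,H}$ converges almost surely to a population binned quantity $\mathrm{EECE}^\star_{M,H}$. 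Here I identify the bin index of a point with a location in the joint prediction space of the $|H|$ hypotheses, and $\Psi$ with the empirical distribution of $(h(x))_{h\in H}$.

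\emph{Step 2 ($|H|\to\infty$).} I let the hypotheses be i.i.d.\ draws from $\Pi$. Conditionally on $X$, the empirical measure of $(h(X))_{h\in H}$ then converges weakly to $\Pi(X)$ almost surely. Consequently the bin averages of $h$, the empirical variance $\frac{1}{|H|}\sum_h(\mathrm{conf}_h-\bar c)^2$ and $EU$ converge to their $\Pi$-versions. I would use dominated convergence, since everything is bounded in $[0,1]$. The bins must be understood as converging to a partition of $\mathcal P([0,1])$ whose diameter is controlled by $d$; I would choose $d$ to be, for example, the Wasserstein-$1$ distance between empirical measures, so that this makes sense as $|H|$ grows.

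\emph{Step 3 ($M\to\infty$).} By the identity above and the law of total variance, the limiting bin summand equals $(\bar r_m-\bar\Delta_m)^2+\overline{EU}_m+(\text{between-point variance of }\Delta\text{ in }I_m)$. Here $\bar r_m$ and $\bar\Delta_m$ are the conditional means of $r(\Psi)$ and $\Delta_\Psi$ over the bin. The target integrand is $\nu(\Psi)-EU_\Psi=(r(\Psi)-\Delta_\Psi)^2$. Lipschitz continuity of $r$ and $\nu$ (and of $\Psi\mapsto\Delta_\Psi$, $\Psi\mapsto EU_\Psi$ under $d$) bounds the deviation of $r$, $\Delta$ and $\nu$ from their bin averages by $L\cdot\mathrm{diam}(I_m)$. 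It follows that each bin summand differs from $\mathbb{E}[\,|\nu-EU|\,\mid I_m]$ by $O(\max_m \mathrm{diam}(I_m))$; the within-bin variance term is $O(\mathrm{diam}^2)$. Summing with weights $P(I_m)$ and letting $M\to\infty$ then yields TECE.

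The main obstacle is Step~2. The binning space $[0,1]^{|H|}$ changes dimension with $|H|$, so the partition $\{I_m\}$ and the metric $d$ must be made compatible across ensemble sizes. I would handle this by working throughout in $\mathcal P([0,1])$ with $d=W_1$: I pull back a fixed partition of that space to each $[0,1]^{|H|}$ through the empirical-measure map, which is permutation invariant, and use the convexity assumption to control boundary effects. This reduces Step~2 to a Glivenko--Cantelli statement uniform over bins.
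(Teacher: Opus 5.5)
Your Step~1 and the Lipschitz/Riemann-sum part of Step~3 follow the paper's proof, and you are right that the paper never really deals with $[0,1]^{|H|}$ changing dimension as $|H|\to\infty$. Your fix, however, breaks the argument. The quantity $\mathrm{conf}_h(I_m)$ averages one \emph{fixed} hypothesis over all inputs in the bin, so the EECE summand is not a function of where the bin sits in $\mathcal{P}([0,1])$. Take $EU$ to be the ensemble variance and set $\Delta_H:=\frac{1}{|H|}\sum_{h\in H}h$. The law of total variance applied to the array $(h(x_n))_{h\in H,\,n\in B_m}$ then gives
\begin{equation*}
\frac{1}{|H|}\sum_{h\in H}\left(\mathrm{conf}_h(I_m)-\bar c(I_m)\right)^2=\mathrm{trust}(I_m)+\mathrm{Var}_{n\in B_m}\!\left[\Delta_H(x_n)\right]-\frac{1}{|H|}\sum_{h\in H}\mathrm{Var}_{n\in B_m}\!\left[h(x_n)\right],
\end{equation*}
with empirical variances over the bin. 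Your Step~3 formula drops the last term: you replaced the between-hypothesis variance of bin averages by the total variance. When $d$ dominates every coordinate (Euclidean or sup metric on $[0,1]^{|H|}$), this term is $O(\mathrm{diam}(I_m)^2)$. The paper tacitly relies on exactly this when it asserts $\hat\nu_m\to\bar\nu_m$. A $W_1$ diameter does not control it. A permutation-invariant bin can contain inputs whose empirical distributions agree while the individual hypotheses sit in different positions.

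This is not a technicality. Let $X$ be uniform on $\{x_1,x_2\}$ with $P(Y=1\mid X=x_i)=\tfrac12$. Let $\Pi$ be uniform on $\{h_a,h_b\}$ with $(h_a,h_b)(x_1)=(0,1)$ and $(h_a,h_b)(x_2)=(1,0)$. Then $\Pi(x_1)=\Pi(x_2)=\tfrac12(\delta_0+\delta_1)$, $r=\Delta=\tfrac12$ and $\nu=EU=\tfrac14$, so $\mathrm{TECE}=0$.

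Now draw $|H|$ hypotheses i.i.d.\ from $\Pi$, and let $p\to\tfrac12$ be the fraction equal to $h_b$. The two empirical measures are at $W_1$-distance $|1-2p|\to 0$, so they eventually share a bin (take partitions with $\tfrac12(\delta_0+\delta_1)$ interior to a bin). In that bin $\mathrm{acc}\to\tfrac12$, every $\mathrm{conf}_h\to\tfrac12$, and $\mathrm{trust}\to p(1-p)$. Your iterated limit of $\mathrm{EECE}_{M,H}$ is therefore $\tfrac14\neq 0$.

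So Step~2 does not reduce to a Glivenko--Cantelli statement. The limit involves $\mathrm{Var}_{h\sim\Pi}\big(\mathbb{E}[h(X)\mid I_m]\big)$, which depends on how hypotheses behave jointly across inputs, not only on the marginals $\Pi(x)$.

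This exposes a genuine tension in your setup. The $\mathrm{conf}_h$ terms need bins that resolve each coordinate. Your Step~1 identity $\mathbb{E}[Y\mid I_m]=\mathbb{E}[r(\Pi(X))\mid I_m]$ needs bins that only see $\Pi(X)$. Coordinate bins violate the latter unless, as the paper tacitly does, $\Pi(X)$ is identified with the labelled vector $(h(X))_{h\in H}$. Under that identification and a coordinate metric, your corrected decomposition makes Step~3 work cleanly for fixed $H$, since $\nu-EU=(r-\Delta)^2\ge 0$. The $|H|\to\infty$ limit at fixed $M$, however, remains as informal as in the paper.
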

The proof for Theorem~\ref{theorem:convergence} is provided in Appendix~\ref{app:proofs}. With the EECE, we provide an empirical tool for evaluating epistemic calibration. This is further demonstrated in the following section, where we empirically evaluate epistemic calibration and show that, beyond being a natural property, models tend to differ substantially in their epistemic calibration despite sharing similar predictive performance.

\section{Experiments}\label{section:xp}

\begin{figure}
    \centering
    \begin{subfigure}{0.46\linewidth}
    \includegraphics[width=0.48\linewidth]{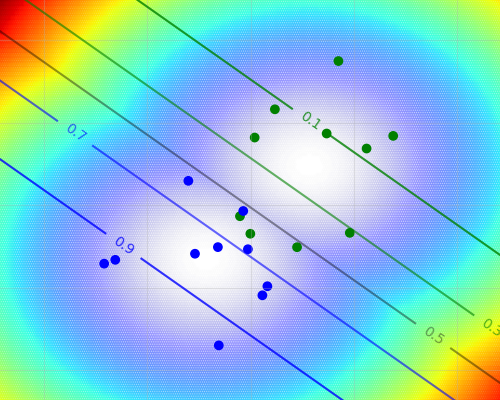}
    \hfill\includegraphics[width=0.48\linewidth]{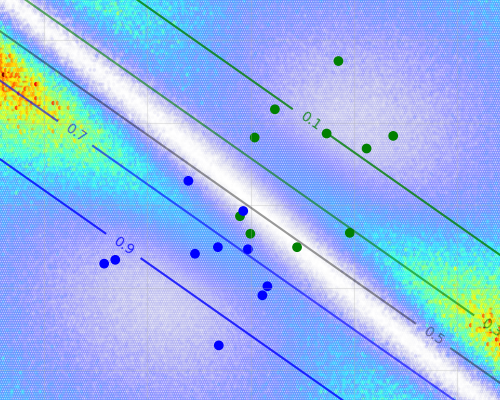}
    \caption{Centroid-based}\label{fig:recal_centroids}
    \end{subfigure}
    \hfill
    \begin{subfigure}{0.46\linewidth}
    \includegraphics[width=0.48\linewidth]{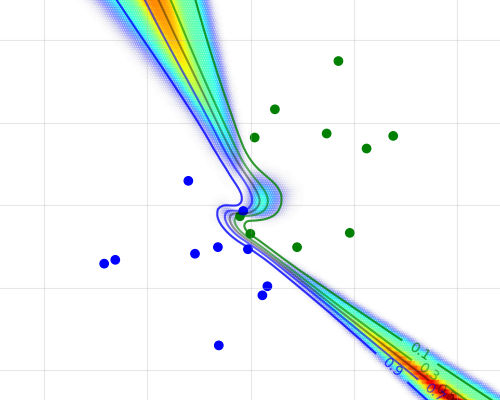}
    \hfill\includegraphics[width=0.48\linewidth]{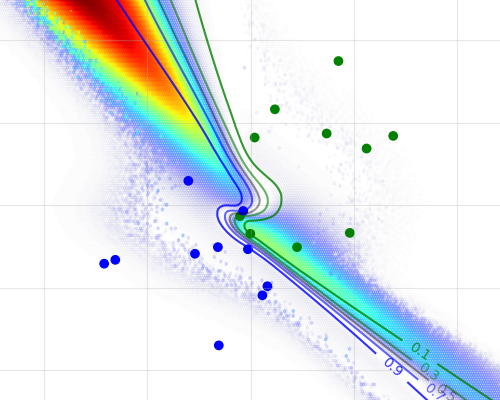}
    \caption{Deep-Ensemble}\label{fig:recal_deepe}
    \end{subfigure}
    \caption{For each model, the left panel displays the estimated epistemic uncertainty $EU(X)$ and the right panel  displays the expected squared error $\mathbb{E}_{h\sim\Pi}\!\left[\left(P\!\left(Y=1 \mid \Pi(X) \right) - h(X)\right)^2\right]$. Red regions for higher values.}
\end{figure}

In this section, we empirically evaluate epistemic calibration through multiple experiments. We first consider a Toy dataset in order to maintain control over the ground truth $P(Y=1 \mid X = x)$ and explore the expected behavior of the EECE. The reader may note that on such a simple dataset, none of the Bayesian or ensemble-based methods are \emph{poorly} epistemically calibrated. We therefore present in Section~\ref{exp:best} a more challenging scenario involving deep vision classification, where we compare epistemic calibration across a range of models.

We consider two groups of methods for estimating epistemic uncertainty. The first group is theoretically compatible with our framework, as their uncertainty estimates are grounded in the variance of a posterior distribution. The second group relies on different justifications and is included solely for illustrative purposes. These methods are not designed to produce epistemically calibrated outputs in our sense, and are therefore expected to perform poorly under our criterion. Further details are available in Appendix~\ref{app:xp}.

\paragraph{Active methods.} As for the studied methods we consider a Bayesian Logistic Regression~\cite{polson2013} with weights and biases following a prior normal distribution of parameters $\mathcal{N}(0, 0.5)$. We also consider Bayesian Deep Learning via Laplace Approximation~\cite{daxberger2022}. For both methods, individual models are sampled from the posterior predictive second-order distribution, and the EECE is computed following Equation~\eqref{eq:eece}. We further include Deep Ensembles~\cite{Lakshminarayanan2017}, Random Forests~\cite{Breiman2001}, DropConnect~\cite{Mobiny2021}, and MC-Dropout~\cite{gal2016}. For these ensemble-based methods, individual ensemble members are treated as samples from the second-order predictive distribution.

\paragraph{Illustrative methods.} For illustrative purposes only, we consider a centroid-based approach~\cite{van-amersfoort20a}, a likelihood based approach~\cite{Nguyen2022}, and a nearest-neighbors based approach~\cite{denoeux1995}. Since these methods do not rely on second-order distributions or ensemble decompositions, we sample hypotheses from a normal distribution using the predicted epistemic uncertainty as the variance, i.e. $\phi\sim\mathcal{N}(h(x), \sqrt{EU(x)})$. This sampling scheme does not reflect the intended semantics of these methods, but allows us to highlight that such a mismatch induces miscalibration with respect to our definition.

\subsection{Evaluating epistemic calibration}

We first consider the Toy dataset in Figure~\ref{fig:dataset} for a completely controlled experiment, where $P_X$ and $P_{Y\mid X}$ are known.
The data are generated according to a mixture of Gaussians:
\begin{equation}
    P(X) = \sum^K_{k=1}\pi_k\cdot\mathcal{N}(X\mid\mu_k, \Sigma_k),
\end{equation}
where $\mathcal{N}(x\mid\mu_k,\Sigma_k)$ denotes the density of a multivariate Gaussian distribution and $\pi_k$ is the \emph{prior} associated with class $k$.

We begin with a voluntarily epistemically miscalibrated model.  Figure~\ref{fig:recal_centroids} presents the epistemic uncertainty of a centroid-based approach and a ``re-calibrated'' version of what should be it's epistemic uncertainty. The epistemic uncertainty is high when test points are far from the 2 class centroids (the four corners of the left panel). However, in reality the model is expected to have a high epistemic uncertainty where the true miscalibrated risk is higher, which is not uniformly the case. In other words, the model says to be less trustworthy in the four corners but fails to detect this properly.

Then, we consider a massively and practically used deep learning model. Figure~\ref{fig:recal_deepe}, presents Deep Ensemble and it's epistemic uncertainty according to a variance-based decomposition. Intuitively, the epistemic uncertainty is high in a ``coherent region'' but in reality the aleatoric calibration error of the individual models is expected to be higher in slightly different regions.

Two additional results are presented in Appendix~\ref{app:xp}: an almost perfectly epistemically calibrated model (Bayesian Logistic Regression) in Figure~\ref{fig:recal_bayesianLR}, and a poorly epistemically calibrated model (Random Forest) in Figure~\ref{fig:recal_forest}. Indeed, Bayesian Logistic Regression is particularly well-suited for two-class Gaussian data when the optimal decision boundary is linear. Additionally, Figure~\ref{fig:ECE_EECE} summarize the epistemic calibration for each model, outlining well calibrated Bayesian and ensemble-based methods, except for Random Forest.  This further supports the claim that a properly specified model should be epistemically calibrated. The following experiment further corroborates that epistemic calibration is a natural property.

\subsection{Expected behavior of the EECE}

This section present an empirical validation of EECE's behavior on a simple controlled dataset. 

Figure~\ref{fig:expected} presents the EECE of Random Forest on our Toy dataset when increasing noise and training set size. The left panel clearly shows that the epistemic calibration error increases as the noise in the training set increases, from almost a factor of 10 (from 0\% noise to 0.95\%). The right panel shows that the EECE decreases when increasing the size of the training set until it reaches a plateau (from .07 with 10 training instances, to .007 with more than 50 instances).

In this experiment, we illustrated the case of Random Forest as it is the most poorly epistemically calibrated model among those compatible with the variance-based framework. The reader may have noticed that on such a simple dataset, most models (excluding the illustrative ones) are quite well epistemically calibrated, and may wonder whether this holds on more practical datasets. The following experiment is therefore conducted on a larger and more complex dataset to address this question.

\begin{figure}
    \centering
    \begin{subfigure}{0.35\linewidth}
    \hfill\includegraphics[width=\linewidth]{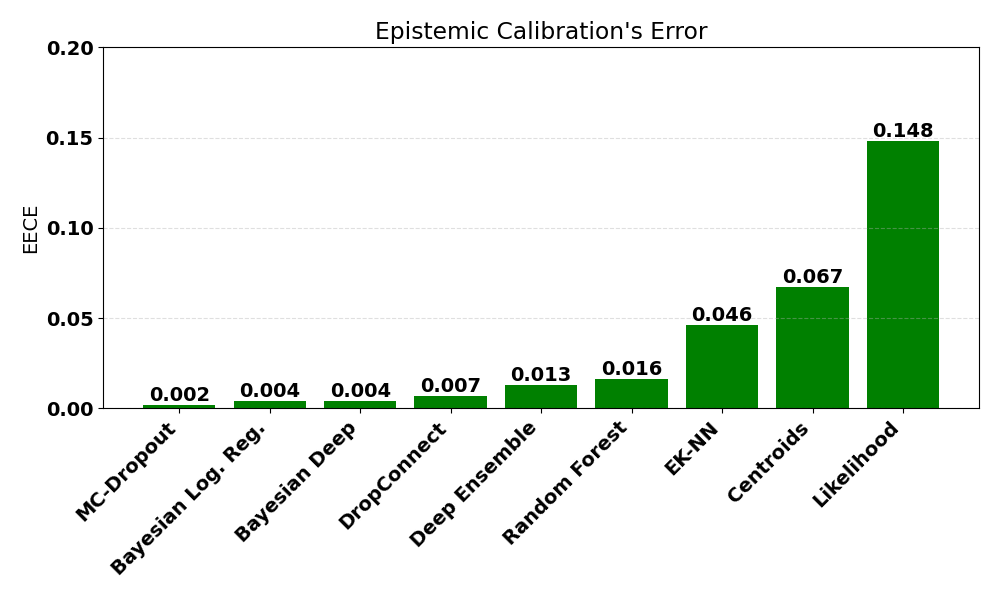}
    \caption{}\label{fig:ECE_EECE}
    \end{subfigure}
    \hfill
    \begin{subfigure}{0.63\linewidth}
    \includegraphics[width=0.49\linewidth]{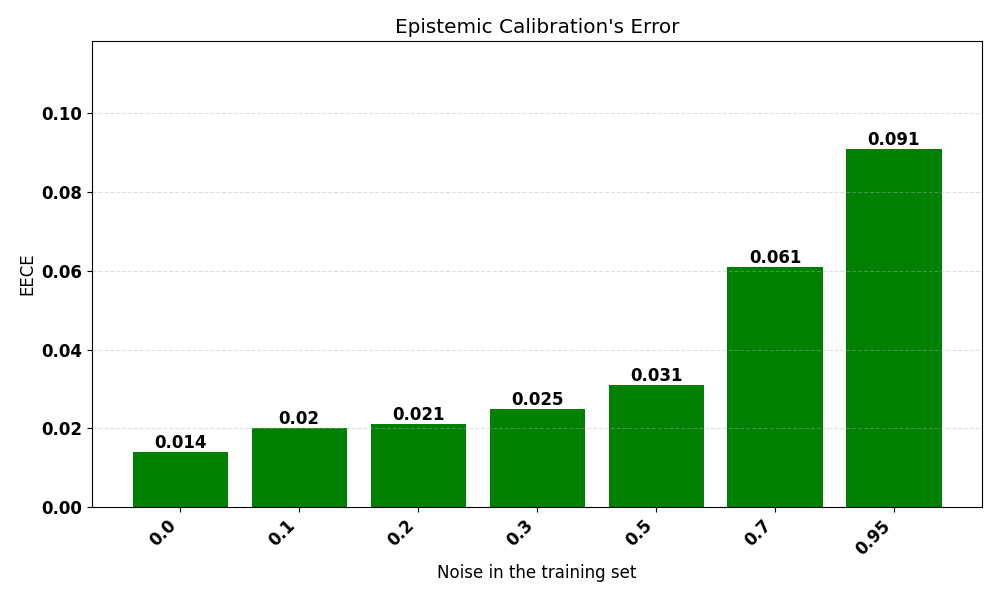}
    \hfill\includegraphics[width=0.49\linewidth]{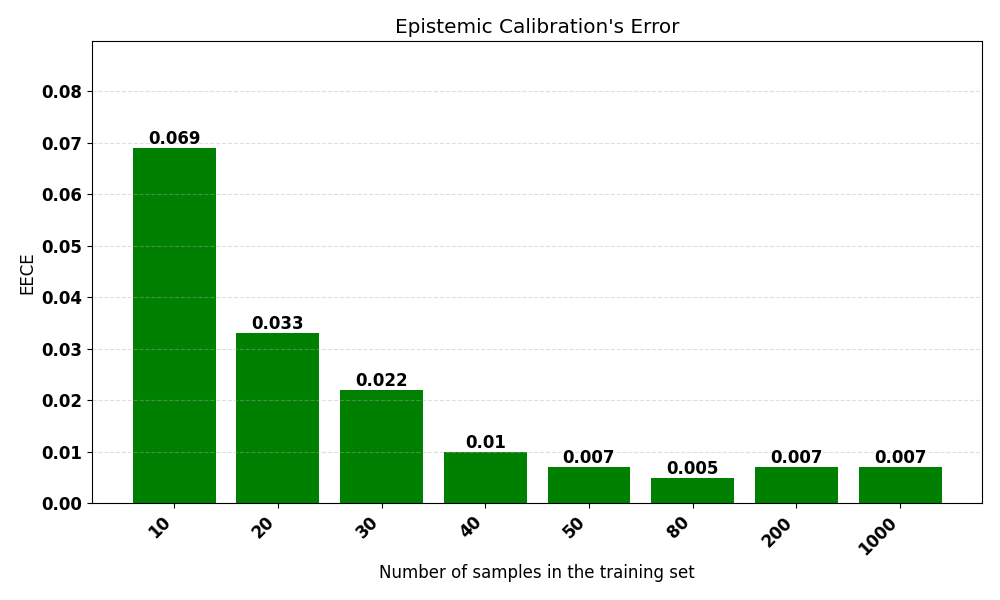}
    \vspace{0.3cm}
    \caption{}\label{fig:expected}
    \end{subfigure}
    \caption{\ref{fig:ECE_EECE}: EECE for different models \emph{vs.} Toy dataset. \ref{fig:expected}: Random Forest's EECE behavior \emph{vs.} increasing noise (left) and training set size (rigt) or Toy dataset.}
\end{figure}

\begin{figure}
    \centering
    \begin{subfigure}{0.45\linewidth}
    \hfill\includegraphics[width=\linewidth]{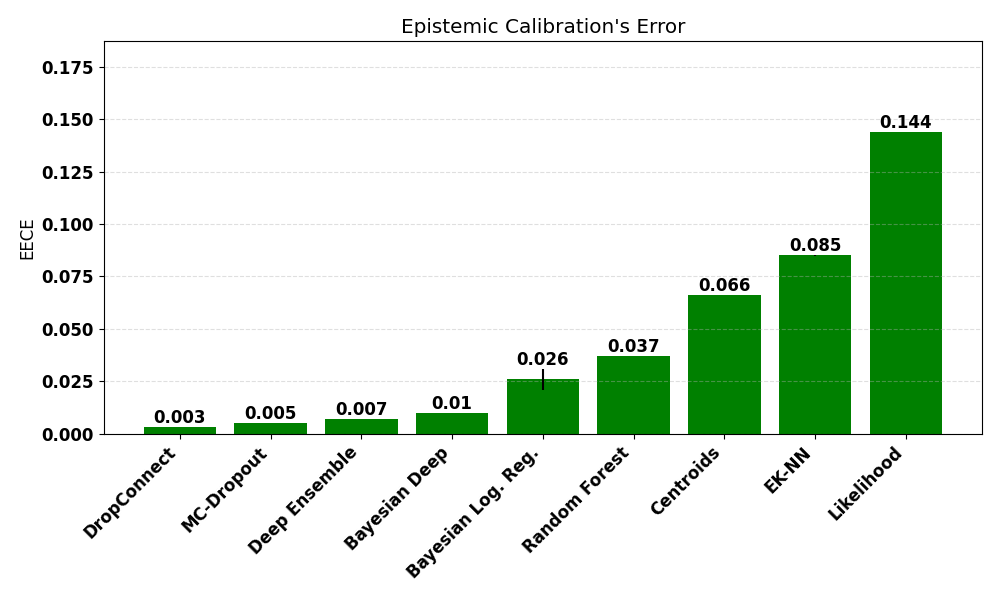}
    \caption{}\label{fig:bench}
    \end{subfigure}
    \hfill
    \begin{subfigure}{0.54\linewidth}
    \includegraphics[width=\linewidth]{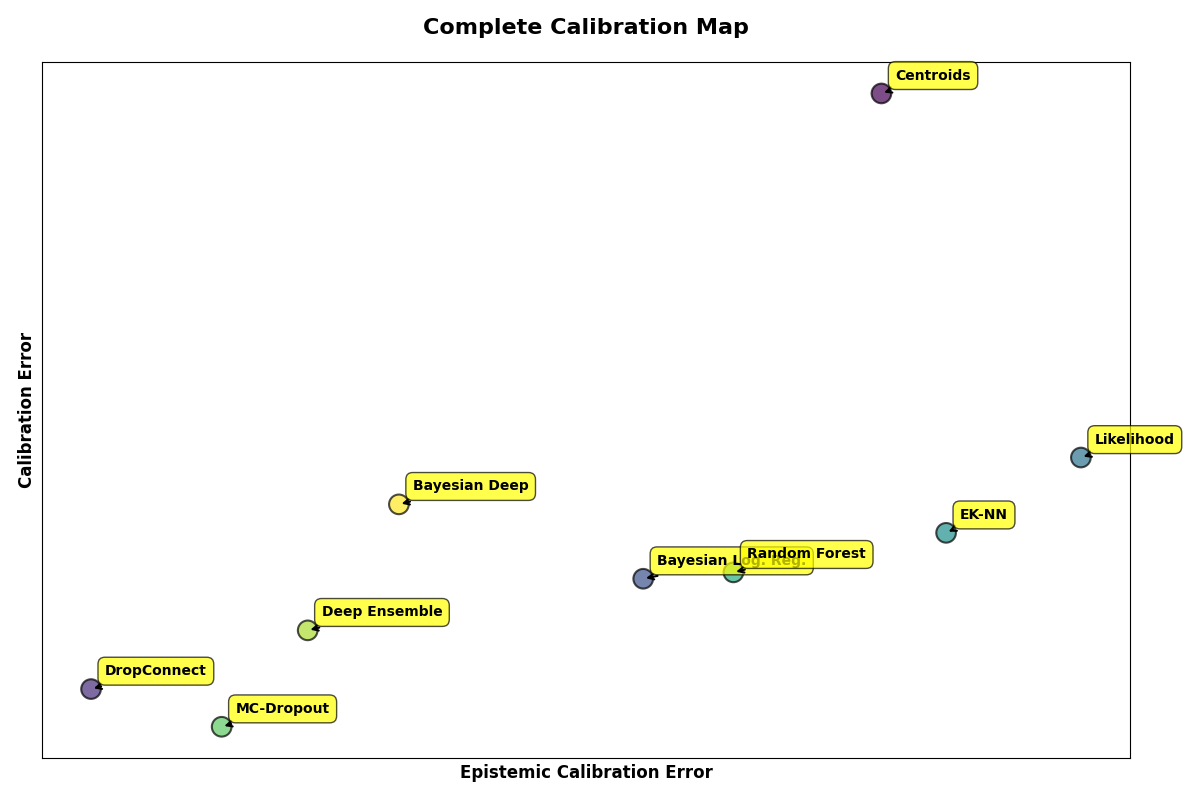}
    \caption{}\label{fig:bench_map}
    \end{subfigure}
    \caption{\ref{fig:bench}: ECE and EECE for multiple uncerainty quantification methods on CIFAR-10 under aggregated One \emph{vs} All classification. \ref{fig:bench_map}: Global calibration map on CIFAR-10, best models are in the bottom-left corner (log-scale).}
\end{figure}

\subsection{Which methods are epistemically calibrated?}\label{exp:best}

In this section, we consider two higher-dimensional vision datasets, MNIST~\cite{726791} and CIFAR-10~\cite{Krizhevsky2009LearningML}. To make the experiments compatible with all studied models, we project the data points into a latent space obtained through an auto-encoder. As our paper only considers the binary classification setting, we divide each problem into 10 \emph{one-vs-all} binary classification tasks. Both datasets comprising 10 classes, the experiment is repeated 10 times, allowing mean and variance to be reported.

On MNIST, the task proved too simple and some models exhibited almost no epistemic calibration error. All results and complementary details are nonetheless available in Appendix~\ref{app:xp}.
Figure~\ref{fig:bench} shows the mean EECE for each model on CIFAR-10. On this dataset, Random Forest remains the worst epistemically calibrated model, followed by Bayesian Logistic Regression, which also exhibits the highest variance. Indeed, while it is a well epistemically calibrated model on linearly separable tasks, it struggles on more complex data, as it is the only model in our study that relies on linear decision boundaries. As expected, on more complex datasets the three illustrative methods still occupy the bottom of the ranking, as they are not designed for a variance-based decomposition of uncertainty.

Figure~\ref{fig:bench} provides an overview of mean results using a log-scale representation: models toward the left are the most epistemically calibrated, while models toward the bottom are the most calibrated in the classical sense. DropConnect and MC-Dropout, which rely on a similar intuition to build diversity, appear overall quite well epistemically calibrated, at least on this task which remains of relative simplicity.

\section{Discussion \& Conclusion}\label{section:conclusion}

In this paper, we introduced the notion of epistemic calibration for second-order models, enabling a finer-grained estimation of the correctness of model predictions than classical calibration in a binary classification setting. We situate our proposal within the existing literature through an impossibility theorem, and reformulate this property with respect to the estimated epistemic uncertainty of the model when considering the variance of the posterior. For a model to be epistemically calibrated, its epistemic uncertainty must match the true expected error of models sampled from the second-order distribution. We further introduced the Expected Epistemic Calibration Error (EECE) and proved that it is a consistent estimator of the True Epistemic Calibration Error (TECE). The EECE is first evaluated on synthetic datasets in order to control the TECE and verify natural behaviour: for instance, models are less epistemically calibrated as noise increases, and better calibrated as training set size grows. We also extend our experiments to deep vision tasks, where DropConnect and MC-Dropout proved to be better epistemically calibrated than their alternatives. A further observation is that epistemic calibration is highly dependent on the inductive bias of the model: for instance, Bayesian Logistic Regression is almost perfectly epistemically calibrated on datasets where its assumptions hold (e.g. linearly separable mixtures of two Gaussian components), while it is poorly epistemically calibrated on complex real-world data.

Ultimately, as machine learning models are increasingly deployed in safety-critical and high-stakes systems, epistemic calibration provides a principled and quantifiable foundation for trust, bridging the gap between uncertainty estimation and the reliability that real-world applications demand.

\bibliographystyle{plain}
\bibliography{ref}

\appendix
\newpage
\setcounter{theorem}{1}
\setcounter{proposition}{0}

\section{Proofs}\label{app:proofs}

\begin{theorem}
    For an epistemically calibrated predictor $\Pi$, the spread of the distribution carries no additional information about $Y$ beyond what the mean already encodes.
\end{theorem}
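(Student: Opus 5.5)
We first make the informal statement precise. "The spread carries no additional information about $Y$ beyond the mean" will be read as the conditional independence $Y \perp EU(X) \mid \Delta(X)$. More strongly, we aim for $Y \perp \Pi(X) \mid \Delta(X)$. Equivalently, for any measurable functional $g$ of the second-order distribution (in particular $g(\Psi)=EU_\Psi=\mathrm{Var}_{\phi\sim\Psi}[\phi]$), we want
\[
P\bigl(Y=1\mid \Delta(X)=a,\ g(\Pi(X))=b\bigr)=P\bigl(Y=1\mid \Delta(X)=a\bigr)=a .
\]
Since $Y$ is binary, its conditional law is fully determined by $P(Y=1\mid\cdot)$. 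Proving this equality of conditional probabilities therefore suffices for conditional independence.

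The core step is the tower property. Both $\Delta(X)=\Delta_{\Pi(X)}$ and $g(\Pi(X))$ are measurable functions of $\Pi(X)$. Hence the $\sigma$-algebra generated by $(\Delta(X),g(\Pi(X)))$ is contained in $\sigma(\Pi(X))$. By epistemic calibration \eqref{eq:ep-cal},
\[
\mathbb{E}[Y\mid \Pi(X)] = P(Y=1\mid \Pi(X)) = \Delta_{\Pi(X)} = \Delta(X)
\]
almost surely. Conditioning on the coarser $\sigma$-algebra then gives
\[
\mathbb{E}[Y\mid \Delta(X), g(\Pi(X))]=\mathbb{E}\bigl[\Delta(X)\mid \Delta(X), g(\Pi(X))\bigr]=\Delta(X).
\]
This quantity does not depend on $g(\Pi(X))$. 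Taking $g$ constant yields mean calibration, $P(Y=1\mid\Delta(X)=\hat p)=\hat p$. Taking $g=EU$ yields epistemic independence, which is exactly the Property stated after Theorem~\ref{theorem:impossibility}. The same argument with $g$ the identity gives the stronger statement that all of $\Pi(X)$, and not just its variance, is uninformative given the mean.

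Finally, we tie this to the bias discussion. Under epistemic calibration, the only quantity about $Y$ retrievable from $\Pi(X)$ is $\Delta(X)$. Consequently, the dispersion $EU(X)$ cannot encode a deviation of $P(Y=1\mid\Pi(X))$ from $\Delta(X)$, i.e.\ it cannot encode a bias term in the sense of the generalized bias in the decomposition above.

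The main obstacle is measure-theoretic rather than conceptual. The definition conditions on the event $\{\Pi(X)=\Psi\}$ for $\Psi\in\mathcal{P}([0,1])$, which typically has probability zero. We must therefore interpret \eqref{eq:ep-cal} as a statement about a regular conditional probability given the random element $\Pi(X)$, valued in $\mathcal{P}([0,1])$ with the weak topology. That space is Polish, so regular versions exist. We also need $\Psi\mapsto\Delta_\Psi$ and $\Psi\mapsto EU_\Psi$ to be measurable; both are continuous in the weak topology on $\mathcal{P}([0,1])$ because $[0,1]$ is compact. With these points addressed, the equalities above hold $P_X$-almost surely, which is the sense in which the theorem is claimed.
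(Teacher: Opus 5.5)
Your proposal is correct and follows the same route as the paper's proof: both use that $\Delta(X)$ and $EU(X)$ are functions of $\Pi(X)$ and apply the tower property to $P(Y=1\mid\Pi(X))=\Delta(X)$, obtaining mean calibration and then $P(Y=1\mid\Delta(X),EU(X))=P(Y=1\mid\Delta(X))$. Your version is somewhat more careful, since it works with $\sigma$-algebras for an arbitrary measurable functional $g$ (which also gives the stronger $Y\perp\Pi(X)\mid\Delta(X)$) and handles the measure-zero conditioning on $\{\Pi(X)=\Psi\}$ that the paper leaves implicit.
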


\begin{proof}
Le $\Pi$ be a second-order model calibrated according to Equation~\eqref{eq:ep-cal} such that
\begin{equation}
    P(Y=1\mid \Pi(X) = \Psi) = \Delta_\Psi,
\end{equation}
where $\Delta(x) = \mathbb{E}_{h\sim\Pi}[h(x)]$ is the mean prediction of the model and $EU(x)$ its variance. As mentioned, fixing a second-order distribution $\Pi$ also naturally fixes its mean $\Delta$, we can thus write
\begin{equation}
\begin{split}
    P(Y=1\mid \Delta(X) = \Delta_\Psi) &= \mathbb{E}[P(Y=1\mid \Pi(X))\mid \Delta(X) = \Delta_\Psi]\\
    &= \Delta_\Psi.
\end{split}
\end{equation}
From this equation we derive the first property induced by epistemic calibration, namely \emph{calibration of the mean}. From this second equation and the epistemic calibration hypothesis, we derive the following property
\begin{equation}
\begin{split}
    P(Y=1\mid \Delta(X) = \Delta_\Psi) &= \Delta_\Psi\\
    &= P(Y=1\mid \Pi(X) = \Psi),
\end{split}
\end{equation}
which we name \emph{epistemic independence}. This can also be reformulated by considering only the variance of the second-order distribution
\begin{equation}
\begin{split}
    P(Y=1\mid \Delta(X) = \Delta_\Psi, EU(X) = EU_\Psi) &= \mathbb{E}[P(Y=1\mid \Pi(X))\mid \Delta(X) = \Delta_\Psi, EU(X) = EU_\Psi]\\
    &= P(Y=1\mid \Delta(X) = \Delta_\Psi).
\end{split}
\end{equation}
This second property guarantees that the variance of the posterior cannot encode information about $Y$ beyond what is already encoded in the mean, i.e. $Y \perp EU(X) \mid\Delta(X)$.
\end{proof}

\begin{proposition}
A second-order predictor $\Pi$ is epistemically calibrated iff, $\forall\, \Psi \in\mathcal{P}([0, 1])$
\begin{equation}
    \mathbb{E}_{\phi \sim \Psi}\!\left[\left(P\!\left(Y=1 \mid \Pi(X) = \Psi\right) - \phi\right)^2\right] = EU_\Psi,
\end{equation}
where $EU_\Psi = \mathrm{Var}_{\phi\sim\Psi}[\phi]$ is the variance of the fixed distribution.
\end{proposition}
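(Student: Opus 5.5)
The argument rests on the bias--variance identity for a fixed distribution. For any $\Psi\in\mathcal{P}([0,1])$ and any constant $c\in[0,1]$,
\begin{equation*}
\mathbb{E}_{\phi\sim\Psi}\!\left[(c-\phi)^2\right] = (c-\Delta_\Psi)^2 + \mathrm{Var}_{\phi\sim\Psi}[\phi] = (c-\Delta_\Psi)^2 + EU_\Psi .
\end{equation*}
To get this, write $c-\phi = (c-\Delta_\Psi) + (\Delta_\Psi-\phi)$, expand the square, and note that the cross term $2(c-\Delta_\Psi)\,\mathbb{E}_{\phi\sim\Psi}[\Delta_\Psi-\phi]$ vanishes because $\Delta_\Psi=\mathbb{E}_{\phi\sim\Psi}[\phi]$. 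All moments exist since $\phi\in[0,1]$.

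Fix $\Psi$ and set $c := P(Y=1\mid \Pi(X)=\Psi)$. This quantity depends only on the conditioning event $\{\Pi(X)=\Psi\}$ and not on the draw $\phi\sim\Psi$, so it is indeed a constant inside $\mathbb{E}_{\phi\sim\Psi}$. Substituting it into the identity shows that condition~\eqref{eq:ep-cal2} at $\Psi$ is equivalent to
\begin{equation*}
\bigl(P(Y=1\mid \Pi(X)=\Psi)-\Delta_\Psi\bigr)^2 = 0 .
\end{equation*}
This holds exactly when $P(Y=1\mid \Pi(X)=\Psi)=\Delta_\Psi$, which is condition~\eqref{eq:ep-cal} at $\Psi$.

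Both directions now follow from this pointwise equivalence. If $\Pi$ is epistemically calibrated, the squared bias vanishes for every $\Psi$, so the expected squared error equals $EU_\Psi$. Conversely, if~\eqref{eq:ep-cal2} holds for every $\Psi$, the squared bias is zero for every $\Psi$, which gives~\eqref{eq:ep-cal}.

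The only delicate point is measure-theoretic. Conditioning on $\{\Pi(X)=\Psi\}$ may be an event of probability zero, so the conditional probability is defined only up to a $P_{\Pi(X)}$-null set. I will interpret both conditions as holding for $P_{\Pi(X)}$-almost every $\Psi$, using a fixed version of the regular conditional probability. This is the same convention implicitly used in the definition of classical calibration. Under this convention the pointwise equivalence transfers directly, because the two conditions fail on exactly the same set of $\Psi$. Beyond this, no obstacle is expected: the proof is the one-line decomposition above.
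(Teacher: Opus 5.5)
Your proof is correct and follows the paper's argument: add and subtract $\Delta_\Psi$, drop the cross term because $\mathbb{E}_{\phi\sim\Psi}[\Delta_\Psi-\phi]=0$, and read off the equivalence from the vanishing of the squared bias $(P(Y=1\mid\Pi(X)=\Psi)-\Delta_\Psi)^2$. Your remark that both conditions should be read for $P_{\Pi(X)}$-almost every $\Psi$, using a fixed version of the conditional probability, is a sensible precision the paper omits, but it does not change the argument.
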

\begin{proof}
We want to show that $\mathbb{E}_{\phi \sim \Psi}\!\left[\left(P\!\left(Y=1 \mid \Pi(X) = \Psi\right) - \phi\right)^2\right] = EU_\Psi$ is equivalent to epistemic calibration. Let us first decompose the left-hand side as
\begin{equation}
\begin{split}
     \mathbb{E}_{\phi\sim\Psi}[(P(Y=1\mid \Pi(X) = \Psi) - \phi)^2] &=  \mathbb{E}_{\phi\sim\Psi}[(P(Y=1\mid \Pi(X) = \Psi) - \Delta_\Psi +  \Delta_\Psi -  \phi)^2]\\
     &=  \mathbb{E}_{\phi\sim\Psi}[(P(Y=1\mid \Pi(X) = \Psi) - \Delta_\Psi)^2 +  (\Delta_\Psi -  \phi)^2\\ &\;\;\;+ 2 (P(Y=1\mid \Pi(X) = \Psi) - \Delta_\Psi)(\Delta_\Psi - \phi)]
     \\
     &= (P(Y=1\mid \Pi(X) = \Psi) - \Delta_\Psi)^2 +  EU_\Psi\\ &\;\;\;+ 2(P(Y=1\mid \Pi(X) = \Psi) - \Delta_\Psi)\mathbb{E}_{\phi\sim\Psi}[(\Delta_\Psi -  \phi)] \\
      &= (P(Y=1\mid \Pi(X) = \Psi) - \Delta_\Psi)^2 +  EU_\Psi,
\end{split}
\end{equation}
where $\mathbb{E}_{\phi\sim\Psi}[(\Delta_\Psi -  \phi)] = 0$ by definition. We can further derive
\begin{equation}
\begin{split}
     \mathbb{E}_{\phi\sim\Psi}[(P(Y=1\mid \Pi(X) = \Psi) - \phi)^2] = EU_\Psi \iff P(Y=1\mid \Pi(X) = \Psi) = \Delta_\Psi.
\end{split}
\end{equation}
This final equivalence proves the if-and-only-if statement of the proposition.
\end{proof}

\begin{theorem}
    Let $r(\Psi) := P(Y=1\mid \Pi(X) = \Psi) $ and $\nu(\Psi) := \mathbb{E}_{\phi \sim \Psi}\!\left[\left(P\!\left(Y=1 \mid \Pi(X) = \Psi\right) - \phi\right)^2\right]$ be two Lipschitz continuous calibration functions and $\{I_m\}_{m=1}^M$ be a partition of the joint prediction space $[0,1]^{|H|}$ into convex bins with metric $d$ such that $\underset{M\rightarrow \infty}{\lim} \underset{\;1 < m < M}{\max} \underset{\;a, b \in I_m}{\sup} d(a,b) = 0$. Then
    \begin{equation}
        \underset{M\rightarrow \infty}{\lim}\underset{|H|\rightarrow \infty}{\lim}\underset{N\rightarrow \infty}{\lim}  \mathrm{EECE}_{M,H} = \mathrm{TECE},
    \end{equation}
    with limits in the almost sure sense.
\end{theorem}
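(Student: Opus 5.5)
The argument mirrors the proof of Theorem~\ref{theorem:original} and handles the three limits from the inside out. First, we rewrite the per-bin summand of $\mathrm{EECE}_{M,H}$ with the bias--variance identity used in the proof of Proposition~\ref{proposition:ep-cal}. Set $\bar c(I_m) := \frac{1}{|H|}\sum_{h\in H}\mathrm{conf}_h(I_m)$. Then
\begin{equation*}
\frac{1}{|H|}\sum_{h\in H}\left(\mathrm{acc}(I_m)-\mathrm{conf}_h(I_m)\right)^2 = \left(\mathrm{acc}(I_m)-\bar c(I_m)\right)^2 + \frac{1}{|H|}\sum_{h\in H}\left(\mathrm{conf}_h(I_m)-\bar c(I_m)\right)^2 .
\end{equation*}
Every quantity in the summand is therefore an empirical average over $B_m$ of bounded functions of $(x_n,y_n)$: the labels, the predictions $h(x_n)$, and $EU(x_n)\in[0,1/4]$. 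All of these take values in $[0,1]$. Because there are finitely many bins and finitely many $h\in H$, the strong law of large numbers applies to each average.

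\textbf{Step 1 ($N\to\infty$, with $M$ and $H$ fixed).} For every bin with $P((h(X))_{h\in H}\in I_m)>0$, the SLLN gives almost surely
\begin{align*}
|B_m|/N &\to P(I_m), &
\mathrm{acc}(I_m) &\to \mathbb{E}[r \mid I_m], \\
\mathrm{conf}_h(I_m) &\to \mathbb{E}[h(X)\mid I_m], &
\mathrm{trust}(I_m) &\to \mathbb{E}[EU(X)\mid I_m].
\end{align*}
Here we condition on the event that the joint prediction vector falls in $I_m$. Bins of probability zero contribute nothing, since their weight $|B_m|/N$ stays equal to $0$ almost surely. The map sending these averages to the absolute value of the summand is continuous, so the continuous mapping theorem gives almost sure convergence to a deterministic ``binned TECE'' $T_{M,H}$. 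This is the population expectation of the same expression with bin-conditional means in place of the empirical ones.

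\textbf{Step 2 ($|H|\to\infty$).} We use the fact that the $h\in H$ are i.i.d.\ draws from $\Pi$. Conditionally on the joint vector $(h(x))_{h\in H}$, the empirical measure $\frac{1}{|H|}\sum_{h}\delta_{h(x)}$ converges weakly to $\Psi=\Pi(x)$. Since everything lives in $[0,1]$, its mean and second moment converge as well. Consequently
\begin{align*}
\frac{1}{|H|}\sum_{h}\left(\mathrm{conf}_h-\bar c\right)^2 &\to \mathbb{E}[EU(X)\mid I_m], \\
\left(\mathrm{acc}-\bar c\right)^2 &\to \left(\mathbb{E}[r\mid I_m]-\mathbb{E}[\Delta(X)\mid I_m]\right)^2 .
\end{align*}
There is a subtlety here: the partition lives in $[0,1]^{|H|}$, so it changes with $|H|$. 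I would interpret the binning through the induced partition of the space of second-order distributions $\Psi$, equipped with the metric $d$ (for instance Wasserstein-1 between empirical measures). The bins should be chosen consistently, so that as $|H|\to\infty$ they become cells $I_m$ of $\mathcal{P}([0,1])$. Dominated convergence, justified by boundedness, then gives $T_{M,H}\to T_M$ almost surely, where
\begin{equation*}
T_M := \sum_m P(I_m)\,\Bigl|\bigl(\mathbb{E}[r\mid I_m]-\mathbb{E}[\Delta\mid I_m]\bigr)^2+\mathbb{E}[EU\mid I_m]-\mathbb{E}[EU\mid I_m]\Bigr| .
\end{equation*}

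\textbf{Step 3 ($M\to\infty$).} By Proposition~\ref{proposition:ep-cal}'s decomposition, $\nu(\Psi)-EU_\Psi = (r(\Psi)-\Delta_\Psi)^2$. Hence $\mathrm{TECE}=\mathbb{E}\bigl[(r(\Pi(X))-\Delta(X))^2\bigr]$. Within a bin of diameter $\delta_M$, the Lipschitz continuity of $r$ gives $|r(\Psi)-\mathbb{E}[r\mid I_m]|\le L_r\delta_M$. The map $\Psi\mapsto\Delta_\Psi$ is $1$-Lipschitz in Wasserstein-1. Because all values lie in $[0,1]$, the square is Lipschitz with constant $2$. Convexity of the bins guarantees that the conditional means remain inside the bin, which is where convexity is used. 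Together these give $|T_M-\mathrm{TECE}|\le C(L_r+1)\,\delta_M\to 0$. The Lipschitz property of $\nu$ enters in exactly the same way if one prefers to bound $\nu$ and $EU$ separately, as in Theorem~\ref{theorem:original}.

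I expect the main obstacle to be Step 2. The other two steps are routine SLLN and Lipschitz arguments. The difficulty in Step 2 is that the partition's ambient space $[0,1]^{|H|}$ depends on $|H|$. We therefore need a coherent family of partitions indexed by $|H|$, together with a metric under which the random empirical measures converge uniformly over bins. My first attempt would be to define bins as preimages of fixed Wasserstein-ball cells under the empirical-measure map. I would then use Glivenko--Cantelli on $[0,1]$, which gives uniform convergence of empirical CDFs, to control misassignment at bin boundaries. The misassignment set has probability tending to zero whenever the boundaries carry no mass.
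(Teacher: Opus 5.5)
There is a genuine gap in Step~2. Your dispersion term $\frac{1}{|H|}\sum_h(\mathrm{conf}_h-\bar c)^2$ is the spread across $h$ of the \emph{bin-averaged} predictions. By contrast, $\mathrm{trust}(I_m)$ is the bin-average of the \emph{pointwise} spread. Pointwise convergence of $\frac{1}{|H|}\sum_h\delta_{h(x)}$ to $\Pi(x)$ does not let you commute the square with the bin average. Set $c_h:=\mathbb{E}[h(X)\mid I_m]$, $\bar h_H:=\frac{1}{|H|}\sum_h h$ and $\widehat{EU}_H(x):=\frac{1}{|H|}\sum_h(h(x)-\bar h_H(x))^2$. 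Then one has exactly
\begin{equation*}
\begin{split}
G_m &:= \mathbb{E}\bigl[\widehat{EU}_H(X)\mid I_m\bigr]-\frac{1}{|H|}\sum_{h\in H}(c_h-\bar c)^2\\
&= \frac{1}{|H|}\sum_{h\in H}\mathrm{Var}\bigl(h(X)\mid I_m\bigr)-\mathrm{Var}\bigl(\bar h_H(X)\mid I_m\bigr)\ \ge\ 0,
\end{split}
\end{equation*}
and $G_m$ does not vanish as $|H|\to\infty$ at fixed $M$. So your $T_M$, in which the two $\mathbb{E}[EU\mid I_m]$ terms cancel, is not the limit of the EECE, and Step~3 never controls $G_m$. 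Here is a concrete counterexample. Let $\mathcal{X}=A\cup B$ with $P(A)=P(B)=\frac12$ and $P(Y=1\mid X)\equiv\frac12$. Let $\Pi$ be uniform on $\{h_a,h_b\}$, where $h_a=0.2$ on $A$, $h_a=0.8$ on $B$, and $h_b=1-h_a$. Then $\Pi(x)=\frac12(\delta_{0.2}+\delta_{0.8})$ for every $x$, so $EU(x)=\nu(\Pi(x))=0.09$, $r=\Delta=\frac12$ and $\mathrm{TECE}=0$. Yet in any bin containing all of $A\cup B$ (e.g.\ $M=1$), $\mathrm{conf}_h\to\frac12$ for every $h$. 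Your dispersion term therefore tends to $0$ while $\mathrm{trust}\to0.09$, and the summand tends to $0.09$.

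The same example shows that your repair of the $|H|$-dependence goes the wrong way. Binning through the empirical measure of $(h(x))_{h\in H}$ forgets which hypothesis produced which value, but $\mathrm{conf}_h$ is indexed by $h$. In the example, the empirical measures of $A$- and $B$-points converge to the same $\Psi$, so for every fixed partition of $\mathcal{P}([0,1])$ they eventually share a cell, and $\lim_M\lim_{|H|}\lim_N\mathrm{EECE}_{M,H}=0.09\neq\mathrm{TECE}$. Joint-space bins of normalized diameter below $0.6$, by contrast, separate them and give summand $0$.

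So the bins must stay in $[0,1]^{|H|}$, as in the paper. The paper's proof tacitly identifies $\Pi(X)$ with the joint prediction (it writes $\Pi(X)\in I_m$). That identification is also what makes $\mathrm{acc}(I_m)\to\mathbb{E}[r\mid I_m]$ legitimate, since the joint vector is not a function of the marginal law $\Pi(X)$. The paper then keeps $\hat\nu_m$ and $\hat\mu_m$ separate rather than cancelling them. It appeals to the shrinking bin width, not to $|H|\to\infty$, to identify the limit of $\hat\nu_m$ with $\mathbb{E}[\nu(\Pi(X))\mid I_m]$, i.e.\ to pass from square-of-averages to average-of-squares. Finally it uses the Lipschitz property of $\nu$ and a Riemann--Stieltjes sum.

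Your bias--variance route can be repaired in the same spirit, and is then arguably cleaner, since it only needs $\mathrm{TECE}=\mathbb{E}[(r-\Delta)^2]$. Take $d(a,b)=\bigl(\frac{1}{|H|}\sum_h(a_h-b_h)^2\bigr)^{1/2}$; then $0\le G_m\le\frac{1}{|H|}\sum_h\mathrm{Var}(h(X)\mid I_m)\le\frac12\delta_M^2$. With $b_m:=(\mathbb{E}[r\mid I_m]-\mathbb{E}[\Delta\mid I_m])^2$, the summand is $|b_m-G_m|$, up to the sampling error between $EU$ and $\widehat{EU}_H$. The bound $\bigl|\,|b_m-G_m|-b_m\bigr|\le G_m$ then lets this term vanish as $M\to\infty$, together with your Lipschitz bound. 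The coherence of the partitions across $|H|$ that you flag remains open, and the paper does not address it either.
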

\begin{proof}
    With $B_{m}$ the index set for bin $I_{m}$, as defined in section~\ref{section:ep_cal},, we respectively define the the estimators for the calibration function $r$, the average prediction, the proportion of predictions, the average trust and the expected average empirical error
    \[
    \hat{r}_{m} := \frac{1}{|B_{m}|}\sum_{n \in B_{m}}\mathds{1}_{\{y_n = 1\}},\quad\hat{h}_{m} := \frac{1}{|B_{m}|}\sum_{n \in B_{m}} h(x_n),\quad
    \hat{\pi}_{m} := \frac{|B_{m}|}{N},
    \]
    \[
    \hat{\mu}_m := \frac{1}{|B_m|}\sum_{n \in B_m} EU(x_n),\quad
    \hat{\nu}_m := \frac{1}{|H|} \sum_{h\in H} \left(\hat{r}_{m} - \hat{h}_{m}\right)^2.
    \]
    We also denote the expected true distribution, the expected predicted distribution, the proportion of predictions, the expected trust, the expected error and the expected mixture distribution
    \[
    \bar{r}_{m} := P\left(Y = 1 \mid \Pi(X) \in I_m\right),\quad
    \bar{h}_{m} := \mathbb{E}_X [h(X)\mid \Pi(X) \in I_m], \quad\bar{\pi}_m := P(\Pi(X) \in I_m),
    \]
    \[
    \bar{\mu}_m := \mathbb{E}_X [EU(X)\mid \Pi(X) \in I_m],\;\;
    \bar{\nu}_m := \mathbb{E}_X \left[\nu(\Pi(X))\mid \Pi(X) \in I_m\right],\quad \bar{\Pi}_m = \mathbb{E}_X\left[\Pi(X)\mid \Pi(X) \in I_m\right].
    \]
    Let the EECE be rewritten as
    \begin{equation}
    \begin{split}
        \mathrm{EECE}_{M,H} & = \sum_{m=1}^M\frac{|B_m|}{N}\left|\left(\frac{1}{|H|}\sum_{h\in H}\left(\mathrm{acc}\left(I_m\right) - \mathrm{conf}_h\left(I_m\right)\right)^2\right) - \mathrm{trust}\left(I_m\right)\right|\\
        & = \sum_{m=1}^M \hat{\pi}_m\left|\hat{\nu}_m - \hat{\mu}_m \right|.
    \end{split}
    \end{equation}
    We also write the TECE as 
    \begin{equation}
    \begin{split}
        \mathrm{TECE} & = \mathbb{E}_X\left[\left|\mathbb{E}_{h\sim\Pi}\!\left[\left(P\!\left(Y=1 \mid \Pi(X) \right) - h(X)\right)^2\right] - EU(X)\right|\right]\\
        & = \mathbb{E}_X \left[\left|\nu\left(\Pi(X)\right) - EU(X)\right|\right].
    \end{split}
    \end{equation}
    From the law of large numbers $\hat{\pi}_m \xrightarrow{a.s.}\bar{\pi}_m$, and $\hat{\mu}_m \xrightarrow{a.s.}\bar{\mu}_m$. 
    From the Lipschitz continuity of $r$, the continuity of the squared Euclidean norm $||.||_2^2$ and the assumption that the bins width is reduced we have
    \begin{equation}
        \underset{|H|\rightarrow \infty}{\lim}\underset{N\rightarrow \infty}{\lim} \hat{\nu}_m = \bar{\nu}_m,
    \end{equation}
    which is an extended result of Theorem~\ref{theorem:original}, i.e. $\mathrm{ECE}$ is a consistent estimator of $\mathrm{TCE}$ for any function continuous and uniformly continuous in its first argument, as proven by~\cite{Vaicenavicius2019}. Rephrased, $\hat{\nu}_m$ is a consistent estimator of $\bar{\nu}_m$ when sample size an the number of bins increase.
    It follows with the continuous mapping theorem that
    \begin{equation}\label{eq:cmt_ep}
    \begin{split}
        \underset{M\rightarrow \infty}{\lim}\underset{|H|\rightarrow \infty}{\lim}\underset{N\rightarrow \infty}{\lim} \sum_{m=1}^M \left|\hat{\pi}_m| \hat{\nu}_m - \hat{\mu}_m | - \bar{\pi}_m| \nu(\bar{\Pi}_m) - \bar{\mu}_m |\right|
        = \sum_{m=1}^M \bar{\pi}_m\left|| \bar{\nu}_m - \bar{\mu}_m | - | \nu(\bar{\Pi}_m) - \bar{\mu}_m |\right|.
    \end{split}
    \end{equation}
    Let $K \geq 0$ be a Lipschitz constant for the calibration function $\nu$ with metric $d$. We can write
    \begin{equation}\label{eq:lipschitz_ep}
    \begin{split}
        |\bar{\nu}_m - \nu(\bar{\Pi}_m)| &=  |\mathbb{E}_X\left[\nu(\Pi(X)) - \nu(\bar{\Pi}_m) \mid \Pi(X) \in I_m\right]|\\
        & \leq \mathbb{E}_X \left[|\nu(\Pi(X)) - \nu(\bar{\Pi}_m)|\mid \Pi(X) \in I_m\right]\\
        &\leq K\mathbb{E}_X \left[d(\Pi(X), \bar{\Pi}_m)\mid \Pi(X) \in I_m\right]\\
        &\leq K \mathbb{E}_X \left[\underset{a,b\in I_m}{\sup}d(a,b)\mid \Pi(X) \in I_m\right]\\
        &\leq K \underset{\;a, b \in I_{m}}{\sup} d(a,b) \\
        &\leq K \underset{\;1\leq m'\leq M}{\max}\underset{\;a, b \in I_{m'}}{\sup} d(a,b).
    \end{split}
    \end{equation}
    Let $\epsilon > 0$, with $x,y,z\in[0,1]$ and from triangle inequality
    \begin{equation}\label{eq:triangle_ep}
        |x-y| < \epsilon \Rightarrow |x - z| - |y - z| < \epsilon.
    \end{equation}
    From equation~\eqref{eq:lipschitz_ep} and the assumption that 
    \[\underset{M\rightarrow \infty}{\lim} \underset{\;1\leq m\leq M}{\max} \underset{\;a, b \in \;I_m}{\sup} d(a,b) = 0,\]
    there exists $M_0 \in \mathbb{N}$ such that $\forall M \geq M_0, \forall I_{m \leq M}, |\bar{\nu}_m - \nu(\bar{\Pi}_m)| < \epsilon.$
    From equation~\eqref{eq:triangle_ep}, we can write
    \begin{equation}
        |\bar{\nu}_m - \bar{\mu}_m| - |\nu(\bar{\Pi}_m) - \bar{\mu}_m| < \epsilon.
    \end{equation}
    From equation~\eqref{eq:cmt_ep}, we obtain $\forall M \geq M_0$
    \begin{equation}
        \underset{|H|\rightarrow \infty}{\lim}\underset{N\rightarrow \infty}{\lim} \sum_{m=1}^M \left|\hat{\pi}_m| \hat{\nu}_m - \hat{\mu}_m | - \bar{\pi}_m| \nu(\bar{\Pi}_m) - \bar{\mu}_m |\right| < \epsilon,
    \end{equation}
    which implies
    \begin{equation}\label{eq:double_lim_ep}
        \underset{M\rightarrow \infty}{\lim}\underset{|H|\rightarrow \infty}{\lim}\underset{N\rightarrow \infty}{\lim} \sum_{m=1}^M \left|\hat{\pi}_m| \hat{\nu}_m - \hat{\mu}_m | - \bar{\pi}_m| \nu(\bar{\Pi}_m) - \bar{\mu}_m |\right| = 0.
    \end{equation}
    With triangle inequality
    \begin{equation}\label{eq:diff_ep}
    \begin{split}
        |\mathrm{TECE} - \underset{|H|\rightarrow \infty}{\lim}\underset{N\rightarrow \infty}{\lim}\mathrm{EECE}_{M,H}| &= \left|\mathbb{E}_X\left[\left|\nu\left(\Pi(X)\right) - EU(X)\right|\right] - \underset{|H|\rightarrow \infty}{\lim}\underset{N\rightarrow \infty}{\lim}\sum_{m=1}^M \hat{\pi}_m\left| \hat{\nu}_m - \hat{\mu}_m \right|\right|\\
        & \leq  \left|\mathbb{E}_X\left[\left|\nu\left(\Pi(X)\right) - EU(X)\right|\right] - \sum_{m=1}^M \bar{\pi}_m| \nu(\bar{\Pi}_m) - \bar{\mu}_m |\right|\\
        & + \left|\underset{|H|\rightarrow \infty}{\lim}\underset{N\rightarrow \infty}{\lim}\sum_{m=1}^M \hat{\pi}_m\left| \hat{\nu}_m - \hat{\mu}_m \right| - \sum_{m=1}^M \bar{\pi}_m| \nu(\bar{\Pi}_m) - \bar{\mu}_m |\right|.
    \end{split}
    \end{equation}
    By definition of the Riemann-Stieltjes integral
    \begin{equation}
        \underset{M\rightarrow \infty}{\lim}\underset{|H|\rightarrow \infty}{\lim}\left|\mathbb{E}_X\left[\left|\nu\left(\Pi(X)\right) - EU(X)\right|\right] - \sum_{m=1}^M \bar{\pi}_m| \nu(\bar{\Pi}_m) - \bar{\mu}_m |\right| = 0.
    \end{equation}
    Together with equations~\eqref{eq:diff_ep} and~\eqref{eq:double_lim_ep} we obtain
    \begin{equation}
    \begin{split}
        |\mathrm{TECE}\; - \underset{M\rightarrow \infty}{\lim}\underset{|H|\rightarrow \infty}{\lim}\underset{N\rightarrow \infty}{\lim}\mathrm{EECE}_{M,H}| & \leq 0\\
        |\mathrm{TECE}\; - \underset{M\rightarrow \infty}{\lim}\underset{|H|\rightarrow \infty}{\lim}\underset{N\rightarrow \infty}{\lim}\mathrm{EECE}_{M,H}| & = 0,
    \end{split}
    \end{equation}
    with limits in the almost sure sense.
\end{proof}

\section{Protocols \& Additional experiments}\label{app:xp}

\subsection{Protocols}

This section details the implemented models along with the protocol for each experiment.

\subsubsection{Models}

We consider a Bayesian Logistic Regression~\cite{polson2013} with weights and biases following a prior $\mathcal{N}(0, 0.5)$. We also consider Bayesian Deep Learning via Laplace Approximation~\cite{daxberger2022}, using two hidden layers of sizes $(100, 20)$. The model is trained with Cross-Entropy Loss using Adam for 20 epochs. Laplace approximation is computed on the last layer with 100 sampled estimators. For both methods, individual models are sampled from the posterior predictive second-order distribution, and the EECE is computed following Equation~\eqref{eq:eece}.
We include Deep Ensemble~\cite{Lakshminarayanan2017} with 10 estimators, varying only the initialization and training sampling order. A Random Forest~\cite{Breiman2001} is trained with 100 decision trees and a minimum of 5 samples per leaf. DropConnect~\cite{Mobiny2021} and MC-Dropout~\cite{gal2016} are each implemented with two hidden layers $(100, 20)$, trained with Cross-Entropy Loss using Adam for 20 epochs. For DropConnect, a \emph{LinearDropConnect} layer with $p=0.3$ and ReLU activation is applied at each hidden layer. For MC-Dropout, dropout is applied on the two last layers with $p=0.3$ and $p=0.2$, respectively. For all ensemble-based methods, individual members are treated as samples from the second-order predictive distribution.

For illustrative purposes, we include three methods that do not rely on second-order distributions or ensemble decompositions. A centroid-based approach~\cite{van-amersfoort20a} (without the neural network component) is used with length-scale $\sigma = \sqrt{10}$. A likelihood-based approach~\cite{Nguyen2022} computes positive-class plausibility using 10 nearest neighbours, where low plausibility for both classes indicates high epistemic uncertainty. A nearest-neighbours approach~\cite{denoeux1995} with 12 neighbours computes epistemic uncertainty via a normalized conjunctive (Dempster's~\cite{Dempster1967}) combination with $\alpha=0.4$ and $\beta=0.5$. For these methods, hypotheses are sampled from $\phi \sim \mathcal{N}(h(x),\, \sqrt{EU(x)})$, using the predicted epistemic uncertainty as variance. This sampling scheme does not reflect the intended semantics of these methods, but serves to highlight that such a mismatch induces miscalibration with respect to our definition.

\subsubsection{Experiments}

The first experiment considers two Gaussians of equal variance $3$, with means $(-2.5, -2.5)$ and $(2.5, 2.5)$ for the positive and negative classes, respectively. Ten data points are sampled for training, while test instances are sampled along a 2D grid, as shown in Section~\ref{section:xp}. The second experiment reuses the same dataset with 50 training points and $40\,000$ samples for the test and calibration phase, noise is introduced by randomly corrupting input values, and the training set size is varied accordingly.

The third experiment employs an auto-encoder (AE) to project inputs into a shared latent space, enabling comparison across all methods regardless of their compatibility with high-dimensional vision inputs. For MNIST, a convolutional AE is used (encoder: Conv2d $1{\to}16$ stride-2 ReLU, Conv2d $16{\to}32$ stride-2 ReLU, Linear $1568{\to}64$ ReLU. decoder: Linear $64{\to}1568$ ReLU, ConvTranspose2d $32{\to}16$ stride-2 ReLU, ConvTranspose2d $16{\to}1$ stride-2 Sigmoid). For CIFAR-10, a wider convolutional AE is used (encoder: Conv2d $3{\to}16$ stride-2 ReLU, Conv2d $16{\to}32$ stride-2 ReLU, Linear $2048{\to}256$ ReLU. decoder: Linear $256{\to}2048$ ReLU, ConvTranspose2d $32{\to}16$ stride-2 ReLU, ConvTranspose2d $16{\to}3$ stride-2 Sigmoid). Both auto-encoders are trained for 5 epochs with MSE loss and the Adam optimizer. Each model is then trained on the projected latent representations, and EECE is estimated with 100 bins using the K-means binning approach described in the following study.

\subsection{Binning study}

In this study, we consider the Toy dataset in Figure~\ref{fig:dataset} for a completely controlled experiment, where $P(X)$ and $P(Y\mid X)$ are known. This allows to compute a true estimation of the TECE without requiring binning.
The data are generated according to a mixture of Gaussians
\begin{equation}
    P(X) = \sum^K_{k=1}\pi_k\cdot\mathcal{N}(X\mid\mu_k, \Sigma_k),
\end{equation}
where $\mathcal{N}(x\mid\mu_k,\Sigma_k)$ denotes the density of a multivariate Gaussian distribution and $\pi_k$ is the \emph{prior} associated with class $k$.

Figure~\ref{fig:binning} shows the difference between the EECE and the TECE (averaged on the calibration set) for a Deep Ensemble when the number of bin increases. The binning strategy to obtain a partition of the product space is a simple K-means strategy with default scikit-learn~\cite{sklearn2011} parameters and the model is trained on $20$ generated points. The number of bins naturally have an impact on the quality of the results. However, with $10'000$ test instances the EECE is already a good estimator of the TECE with around $200$ bins. Of course, reaching more than this number of bins, the partition becomes sparse and more data-points need to be added into the calibration test set.

\begin{figure}
    \centering
    \includegraphics[width=0.7\linewidth]{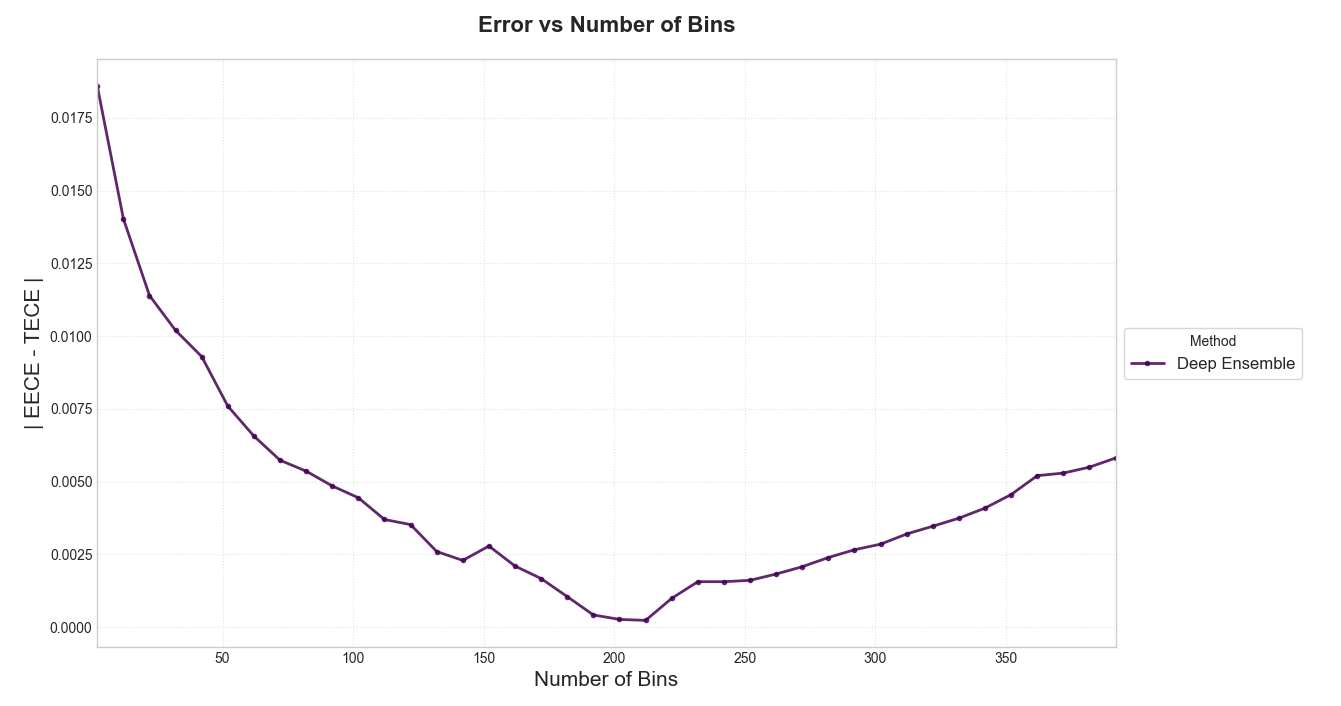}
    \caption{Absolute difference between the estimator through binning and the true expected value \emph{vs} the number of bins with Deep Ensemble. The minimum attained in $\sim$200 bins shows a good compromise between the number of bins and the density of the bins.}\label{fig:binning}
\end{figure}

\begin{figure}
    \centering
    \begin{subfigure}{0.46\linewidth}
    \includegraphics[width=0.48\linewidth]{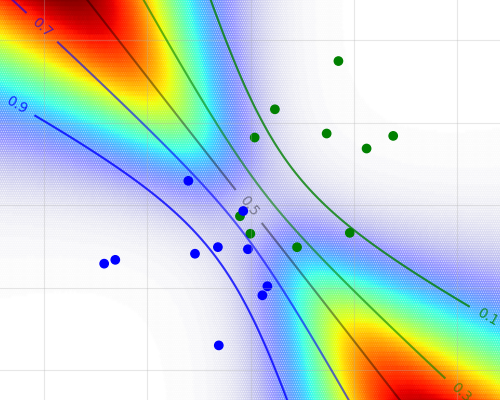}
    \hfill\includegraphics[width=0.48\linewidth]{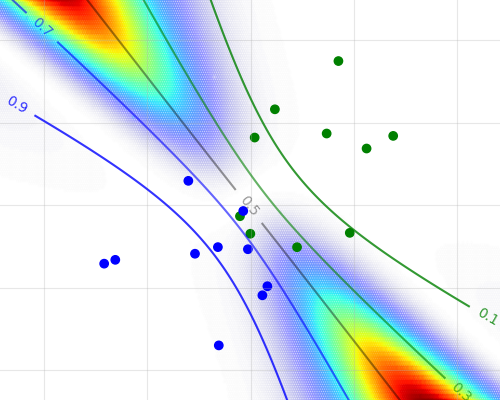}
    \caption{Bayesian Logistic Regression}\label{fig:recal_bayesianLR}
    \end{subfigure}
    \hfill
    \begin{subfigure}{0.46\linewidth}
    \includegraphics[width=0.48\linewidth]{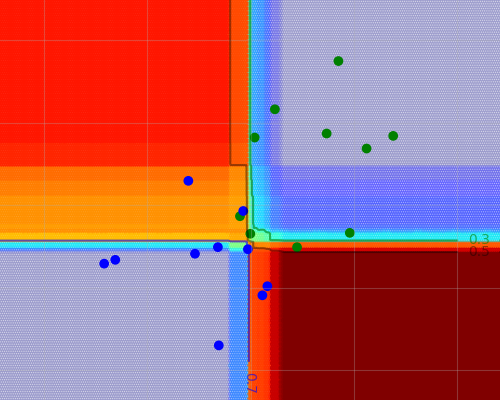}
    \hfill\includegraphics[width=0.48\linewidth]{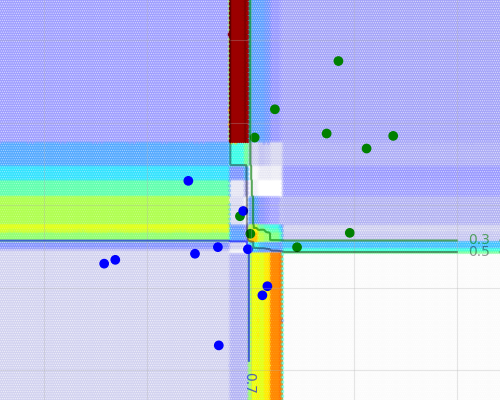}
    \caption{Random Forest}\label{fig:recal_forest}
    \end{subfigure}
    \caption{For each model, the left panel displays the estimated epistemic uncertainty $EU(X)$ and the right panel  displays the expected squared error $\mathbb{E}_{h\sim\Pi}\!\left[\left(P\!\left(Y=1 \mid \Pi(X) \right) - h(X)\right)^2\right]$. Red regions for higher values.}
\end{figure}

\begin{table}
\centering
\scriptsize
\begin{tabular}{cc ccccccccc}
 && \rotatebox{75}{Centroids} & \rotatebox{75}{DropConnect} & \rotatebox{75}{Bayesian Log. Reg.} & \rotatebox{75}{Likelihood} & \rotatebox{75}{EK-NN} & \rotatebox{75}{Random Forest} & \rotatebox{75}{MC-Dropout} & \rotatebox{75}{Deep Ensemble} & \rotatebox{75}{Bayesian Deep} \\
\toprule
\multirow{10}{*}{Acc.} & 0vsAll & 0.94 & 1.00 & 0.99 & 1.00 & 1.00 & 0.99 & 1.00 & 1.00 & 1.00 \\ & 1vsAll & 0.95 & 1.00 & 0.99 & 1.00 & 1.00 & 1.00 & 1.00 & 1.00 & 1.00 \\ & 2vsAll & 0.93 & 0.99 & 0.98 & 1.00 & 1.00 & 0.98 & 1.00 & 1.00 & 1.00 \\ & 3vsAll & 0.88 & 0.99 & 0.97 & 0.99 & 0.99 & 0.98 & 1.00 & 1.00 & 0.99 \\ & 4vsAll & 0.87 & 0.99 & 0.98 & 0.99 & 0.99 & 0.98 & 1.00 & 1.00 & 1.00 \\ & 5vsAll & 0.85 & 0.99 & 0.96 & 1.00 & 1.00 & 0.98 & 1.00 & 1.00 & 1.00 \\ & 6vsAll & 0.95 & 1.00 & 0.98 & 1.00 & 1.00 & 0.99 & 1.00 & 1.00 & 1.00 \\ & 7vsAll & 0.94 & 0.99 & 0.98 & 0.99 & 0.99 & 0.98 & 1.00 & 1.00 & 0.99 \\ & 8vsAll & 0.89 & 0.99 & 0.97 & 0.99 & 0.99 & 0.98 & 0.99 & 1.00 & 0.99 \\ & 9vsAll & 0.83 & 0.99 & 0.96 & 0.99 & 0.99 & 0.98 & 0.99 & 0.99 & 0.99 \\
\midrule
\multirow{10}{*}{ECE} & 0vsAll & 0.35 & 0.01 & 0.01 & 0.01 & 0.03 & 0.04 & 0.01 & 0.01 & 0.01 \\ & 1vsAll & 0.36 & 0.01 & 0.01 & 0.01 & 0.03 & 0.02 & 0.01 & 0.01 & 0.01 \\ & 2vsAll & 0.37 & 0.01 & 0.01 & 0.01 & 0.03 & 0.05 & 0.01 & 0.01 & 0.01 \\ & 3vsAll & 0.36 & 0.02 & 0.02 & 0.01 & 0.03 & 0.05 & 0.01 & 0.01 & 0.01 \\ & 4vsAll & 0.37 & 0.01 & 0.01 & 0.01 & 0.03 & 0.04 & 0.01 & 0.01 & 0.01 \\ & 5vsAll & 0.39 & 0.02 & 0.01 & 0.01 & 0.03 & 0.05 & 0.01 & 0.01 & 0.01 \\ & 6vsAll & 0.37 & 0.01 & 0.01 & 0.01 & 0.03 & 0.03 & 0.01 & 0.01 & 0.01 \\ & 7vsAll & 0.36 & 0.01 & 0.01 & 0.01 & 0.03 & 0.04 & 0.01 & 0.01 & 0.01 \\ & 8vsAll & 0.37 & 0.02 & 0.02 & 0.01 & 0.03 & 0.05 & 0.01 & 0.01 & 0.01 \\ & 9vsAll & 0.37 & 0.01 & 0.02 & 0.01 & 0.03 & 0.04 & 0.01 & 0.01 & 0.01 \\
\midrule
\multirow{10}{*}{EECE} & 0vsAll & 0.07 & 0.00 & 0.00 & 0.14 & 0.04 & 0.01 & 0.00 & 0.00 & 0.00 \\ & 1vsAll & 0.09 & 0.00 & 0.00 & 0.14 & 0.04 & 0.01 & 0.00 & 0.00 & 0.00 \\ & 2vsAll & 0.10 & 0.00 & 0.00 & 0.14 & 0.04 & 0.02 & 0.00 & 0.00 & 0.00 \\ & 3vsAll & 0.09 & 0.00 & 0.00 & 0.14 & 0.04 & 0.02 & 0.00 & 0.00 & 0.00 \\ & 4vsAll & 0.09 & 0.00 & 0.00 & 0.14 & 0.04 & 0.01 & 0.00 & 0.00 & 0.00 \\ & 5vsAll & 0.10 & 0.00 & 0.00 & 0.14 & 0.04 & 0.02 & 0.00 & 0.00 & 0.00 \\ & 6vsAll & 0.09 & 0.00 & 0.00 & 0.14 & 0.04 & 0.01 & 0.00 & 0.00 & 0.00 \\ & 7vsAll & 0.09 & 0.00 & 0.00 & 0.14 & 0.04 & 0.01 & 0.00 & 0.00 & 0.00 \\ & 8vsAll & 0.10 & 0.00 & 0.00 & 0.14 & 0.04 & 0.02 & 0.00 & 0.00 & 0.00 \\ & 9vsAll & 0.09 & 0.00 & 0.00 & 0.14 & 0.04 & 0.02 & 0.00 & 0.00 & 0.00 \\
\bottomrule
\end{tabular}
\caption{Accuracy, ECE and EECE for one-vs-all classification on MNIST.}
\end{table}

\begin{table}
\centering
\scriptsize
\begin{tabular}{cc ccccccccc}
 && \rotatebox{75}{Centroids} & \rotatebox{75}{DropConnect} & \rotatebox{75}{Bayesian Log. Reg.} & \rotatebox{75}{Likelihood} & \rotatebox{75}{EK-NN} & \rotatebox{75}{Random Forest} & \rotatebox{75}{MC-Dropout} & \rotatebox{75}{Deep Ensemble} & \rotatebox{75}{Bayesian Deep} \\
\toprule
\multirow{10}{*}{Acc.} & airplane & 0.71 & 0.71 & 0.93 & 0.93 & 0.91 & 0.91 & 0.92 & 0.92 & 0.92 \\ & auto. & 0.70 & 0.70 & 0.93 & 0.93 & 0.91 & 0.91 & 0.91 & 0.91 & 0.91 \\ & bird & 0.65 & 0.65 & 0.91 & 0.91 & 0.90 & 0.90 & 0.90 & 0.90 & 0.90 \\ & cat & 0.67 & 0.67 & 0.90 & 0.90 & 0.90 & 0.90 & 0.90 & 0.90 & 0.90 \\ & deer & 0.67 & 0.67 & 0.91 & 0.91 & 0.90 & 0.90 & 0.88 & 0.88 & 0.88 \\ & dog & 0.68 & 0.68 & 0.91 & 0.91 & 0.90 & 0.90 & 0.91 & 0.91 & 0.91 \\ & frog & 0.64 & 0.64 & 0.92 & 0.92 & 0.90 & 0.90 & 0.90 & 0.90 & 0.91 \\ & horse & 0.63 & 0.63 & 0.93 & 0.93 & 0.91 & 0.91 & 0.91 & 0.91 & 0.91 \\ & ship & 0.71 & 0.71 & 0.94 & 0.94 & 0.91 & 0.91 & 0.91 & 0.91 & 0.91 \\ & truck & 0.70 & 0.70 & 0.92 & 0.92 & 0.69 & 0.69 & 0.91 & 0.91 & 0.91 \\
\midrule
\multirow{10}{*}{ECE} & airplane & 0.34 & 0.01 & 0.02 & 0.05 & 0.04 & 0.04 & 0.02 & 0.02 & 0.03 \\ & auto. & 0.37 & 0.03 & 0.01 & 0.06 & 0.03 & 0.04 & 0.02 & 0.01 & 0.04 \\ & bird & 0.38 & 0.02 & 0.01 & 0.07 & 0.05 & 0.02 & 0.01 & 0.04 & 0.05 \\ & cat & 0.38 & 0.01 & 0.01 & 0.03 & 0.01 & 0.01 & 0.01 & 0.04 & 0.07 \\ & deer & 0.35 & 0.01 & 0.01 & 0.09 & 0.07 & 0.02 & 0.01 & 0.03 & 0.04 \\ & dog & 0.38 & 0.01 & 0.01 & 0.03 & 0.01 & 0.03 & 0.02 & 0.03 & 0.06 \\ & frog & 0.36 & 0.02 & 0.01 & 0.06 & 0.04 & 0.04 & 0.02 & 0.01 & 0.04 \\ & horse & 0.39 & 0.02 & 0.01 & 0.05 & 0.02 & 0.04 & 0.01 & 0.02 & 0.04 \\ & ship & 0.35 & 0.02 & 0.01 & 0.06 & 0.06 & 0.04 & 0.01 & 0.02 & 0.03 \\ & truck & 0.35 & 0.01 & 0.19 & 0.06 & 0.03 & 0.03 & 0.01 & 0.02 & 0.04 \\
\midrule
\multirow{10}{*}{EECE} & airplane & 0.06 & 0.00 & 0.00 & 0.14 & 0.09 & 0.04 & 0.01 & 0.01 & 0.01 \\ & auto. & 0.07 & 0.00 & 0.00 & 0.15 & 0.08 & 0.03 & 0.01 & 0.01 & 0.01 \\ & bird & 0.07 & 0.00 & 0.00 & 0.14 & 0.09 & 0.04 & 0.00 & 0.01 & 0.01 \\ & cat & 0.07 & 0.00 & 0.00 & 0.15 & 0.09 & 0.04 & 0.00 & 0.01 & 0.01 \\ & deer & 0.07 & 0.00 & 0.00 & 0.13 & 0.09 & 0.04 & 0.01 & 0.01 & 0.01 \\ & dog & 0.07 & 0.00 & 0.00 & 0.15 & 0.09 & 0.04 & 0.01 & 0.01 & 0.01 \\ & frog & 0.07 & 0.00 & 0.00 & 0.14 & 0.09 & 0.04 & 0.01 & 0.01 & 0.01 \\ & horse & 0.08 & 0.00 & 0.00 & 0.15 & 0.08 & 0.04 & 0.01 & 0.01 & 0.01 \\ & ship & 0.05 & 0.00 & 0.00 & 0.14 & 0.08 & 0.04 & 0.01 & 0.00 & 0.01 \\ & truck & 0.06 & 0.00 & 0.24 & 0.15 & 0.08 & 0.04 & 0.01 & 0.01 & 0.01 \\
\bottomrule
\end{tabular}
\caption{Accuracy, ECE and EECE for one-vs-all classification on CIFAR-10.}
\end{table}

\end{document}